\newcommand{\papertitle}{Sublabel-Accurate Discretization of Nonconvex Free-Discontinuity Problems}
\newcommand{\bitem}{\begin{itemize}}
\newcommand{\eitem}{\end{itemize}}
\newcommand{\bpm}{\begin{pmatrix}}      
\newcommand{\epm}{\end{pmatrix}}
\newcommand{\tmop}[1]{\ensuremath{\operatorname{#1}}}
\newcommand{\norm}[1]{\Vert #1 \Vert}
\newcommand{\normc}[1]{| #1 |}
\newcommand{\bbR}{\mathbb{R}}
\newcommand{\bbRext}{\mathbb{R} \cup \{ \infty \} }
\providecommand{\iprod}[2]{\langle#1,#2\rangle}
\newcommand{\bi}{\begin{itemize}}
\newcommand{\ei}{\end{itemize}}
\newcommand{\Ss}{\mathcal{S}}
\newcommand{\cref}[1]{ {\tiny[{#1}]}}
\newcommand{\tm}[1]{}
\newcommand{\dom}{\mathsf{dom}}
\newcommand{\epi}{\mathsf{epi}}
\newcommand{\beq}{\begin{equation}}
\newcommand{\eeq}{\end{equation}}
\newcommand{\beqa}{\begin{eqnarray}}
\newcommand{\eeqa}{\end{eqnarray}}
\newcommand{\bc}{\begin{center}}
\newcommand{\ec}{\end{center}}
\newcommand{\BV}{\tmop{BV}}
\newcommand{\SBV}{\tmop{SBV}}
\newcommand \TV         {{TV}}                              
\newcommand{\Div}{\tmop{Div}}
\newtheorem{prop}{Proposition}
\newcommand{\vl}{\boldsymbol{v}}
\newcommand{\one}[1]{\mathbf{1}_{#1}}
\newcommand{\measurerestr}{%
  \,\raisebox{-.127ex}{\reflectbox{\rotatebox[origin=br]{-90}{$\lnot$}}}\,%
}
\newcommand{\iver}[1]{\llbracket {#1} \rrbracket}
\ificcvfinal\pagestyle{empty}\fi
\begin{document}

\newcommand{\figConvexExact}{
\begin{figure}[t!]
  \centering
  \captionsetup[subfloat]{labelformat=empty,justification=centering,singlelinecheck=false}
  \subfloat[][\scriptsize Direct Optimization\\$E_{\text{Q}}=2002.9$]{
    \includegraphics[width=0.11\textwidth,trim={0 5cm 0 1cm},clip]{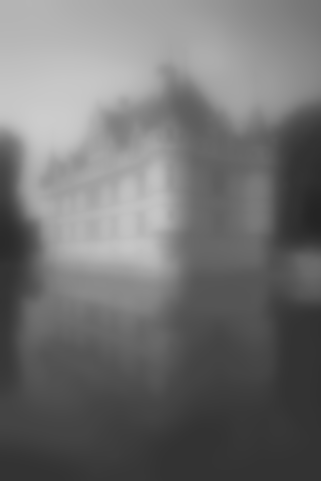}
  }
  \subfloat[][\scriptsize \cite{PCBC-SIIMS}, $\ell=2$\\ $E_{\text{Q}}=15708.3$]{
    \includegraphics[width=0.11\textwidth,trim={0 5cm 0 1cm},clip]{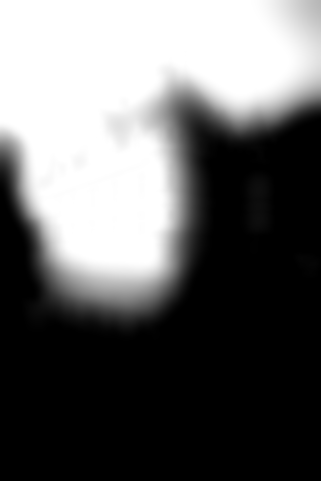}
  }
  \subfloat[][\scriptsize \cite{PCBC-SIIMS}, $\ell=3$\\ $E_{\text{Q}}=5103.8$]{
    \includegraphics[width=0.11\textwidth,trim={0 5cm 0 1cm},clip]{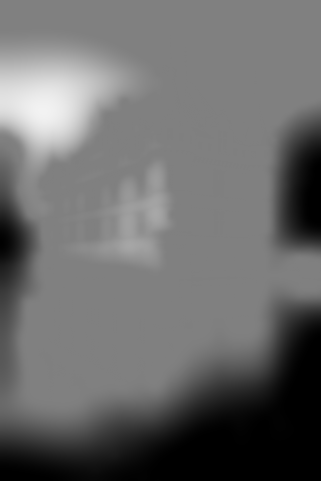}
  }
  \subfloat[][\scriptsize \cite{PCBC-SIIMS}, $\ell=5$\\ $E_{\text{Q}}=2415.9$]{
    \includegraphics[width=0.11\textwidth,trim={0 5cm 0 1cm},clip]{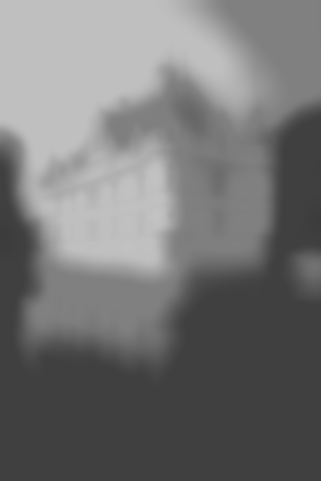}
  }\\
  \subfloat[][\scriptsize \cite{PCBC-SIIMS}, $\ell=16$\\ $E_{\text{Q}}=2016.5$]{
    \includegraphics[width=0.11\textwidth,trim={0 5cm 0 1cm},clip]{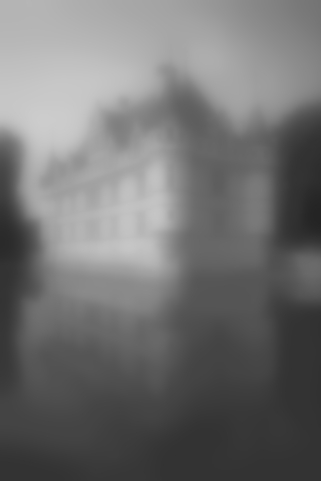}
  }
  \subfloat[][\scriptsize Proposed, $\ell=2$\\ $E_{\text{Q}}=2002.9$]{
    \includegraphics[width=0.11\textwidth,trim={0 5cm 0 1cm},clip]{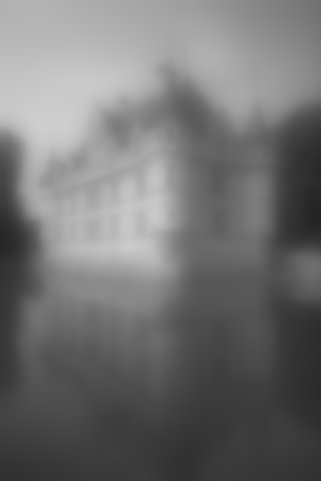}
  }
  \subfloat[][\scriptsize Proposed, $\ell=3$\\ $E_{\text{Q}}=2002.9$]{
    \includegraphics[width=0.11\textwidth,trim={0 5cm 0 1cm},clip]{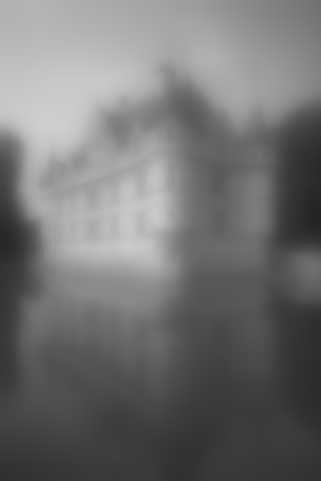}
  }
  \subfloat[][\scriptsize Proposed, $\ell=5$\\ $E_{\text{Q}}=2002.9$]{
    \includegraphics[width=0.11\textwidth,trim={0 5cm 0 1cm},clip]{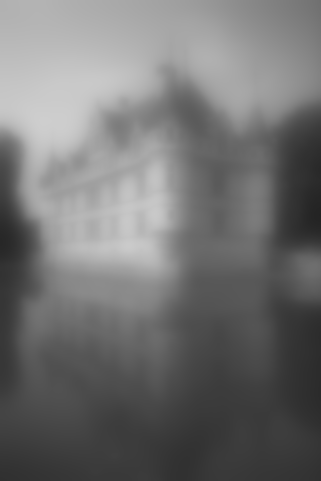}
  }
  \caption{To verify the tightness of the approximation, we optimize a convex problem (quadratic data term with
    quadratic regularization). The discretization with piecewise linear $\varphi_t$ recovers the 
exact solution with $2$ labels and remains tight (numerically) for all $\ell > 2$, while the 
traditional discretization from \cite{PCBC-SIIMS} leads to a strong label bias.} 
  \label{fig:convex_exact}
\end{figure}
}

\newcommand{\figFEM}{
\begin{figure}[t!]
  \centering
  \captionsetup[subfloat]{justification=centering,singlelinecheck=false}
  \subfloat[]{
    \includegraphics[width=0.46\textwidth]{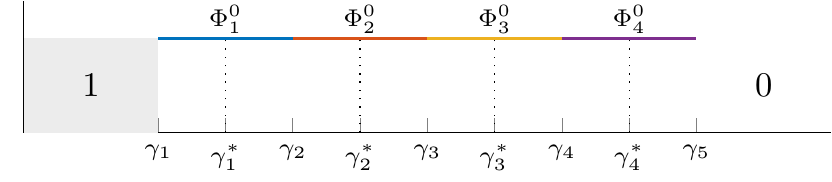}
  }\\[-0mm]
  \subfloat[]{
    \includegraphics[width=0.46\textwidth]{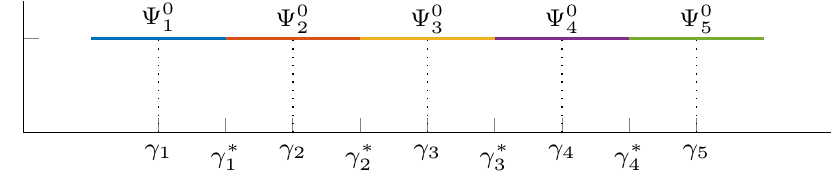}
  }\\[-0mm]
  \subfloat[]{
    \includegraphics[width=0.46\textwidth]{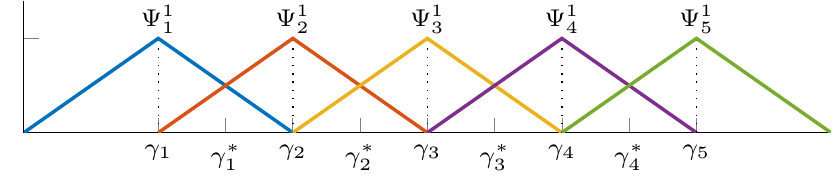}
  }
  \caption{Overview of the notation and proposed finite dimensional approximation spaces. } 
  \label{fig:fem}
\end{figure}
}

\newcommand{\figConstantVsLinear}{
\begin{figure*}[t!]
  \centering
  \captionsetup[subfloat]{labelformat=empty,justification=centering,singlelinecheck=false}
  \subfloat[]{
    \includegraphics[width=0.485\textwidth]{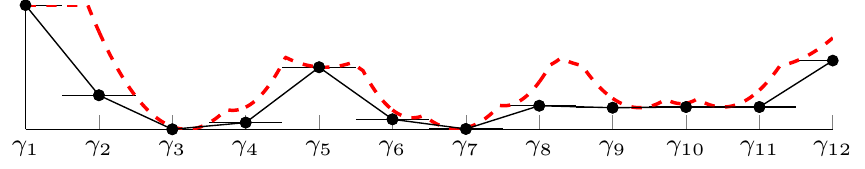}
  }
  \captionsetup[subfloat]{labelformat=empty,justification=centering,singlelinecheck=false}
  \subfloat[]{
    \includegraphics[width=0.485\textwidth]{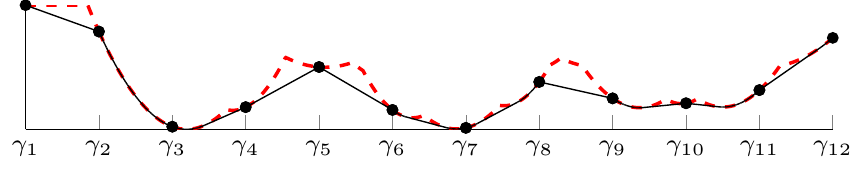}
  }
  \caption{\textbf{Left:} piecewise constant dual variables $\varphi_t$ lead to a linear
    approximation (shown in black) to the original cost function (shown in red). The unaries
    are determined through min-pooling of the continuous cost in the Voronoi cells around the labels. \textbf{Right:} continuous piecewise linear dual variables $\varphi_t$ convexify the costs on each interval.} 
  \label{fig:constant_vs_linear}
\end{figure*}
}

\newcommand{\figGraph}{
\begin{figure}[t!]
  \centering
  \captionsetup[subfloat]{labelformat=empty,justification=centering,singlelinecheck=false}
  \subfloat[]{
    \includegraphics[width=0.46\textwidth]{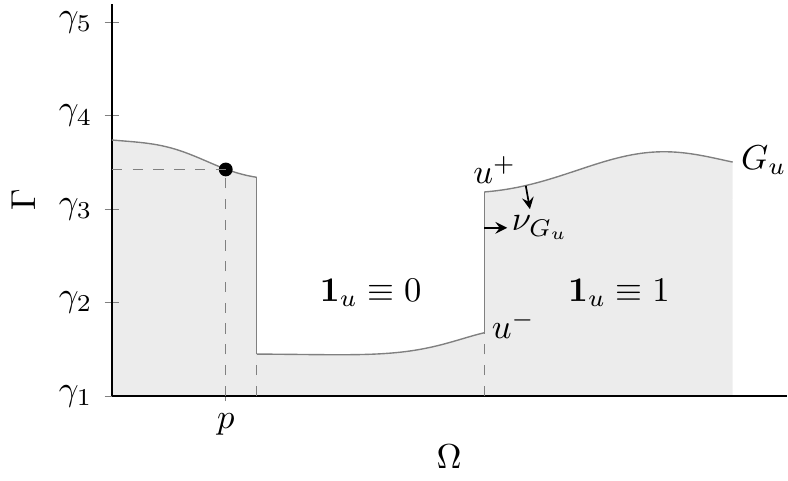}
  }

  \caption{The central idea behind the convex relaxation for problem \eqref{eq:unlifted_cont_mshah_general}
    is to reformulate the functional in terms of the complete graph $G_u \subset \Omega \times \Gamma$ of $u : \Omega \to \Gamma$ in the product space. This procedure is often referred to as ``lifting'', as one lifts the dimensionality of the problem.
  } 
  \label{fig:graph}
\end{figure}
}

\newcommand{\figJointStereoSegm}{
\begin{figure*}[t!]
  \centering
  \subfloat[][\scriptsize Left input image]{
    \includegraphics[width=0.189\textwidth]{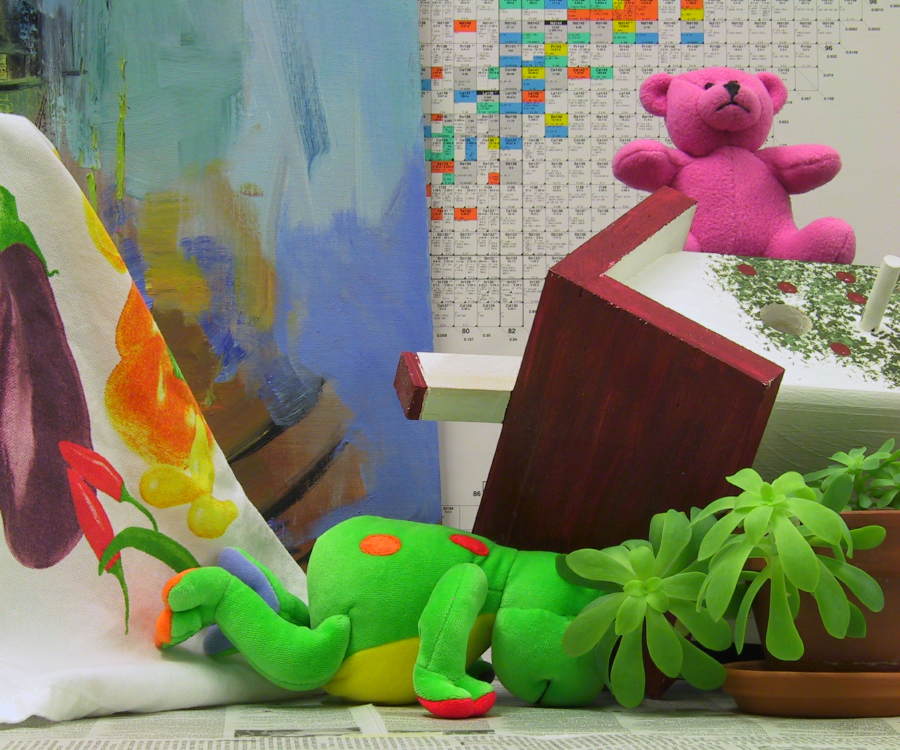}
  }
  \subfloat[][\scriptsize Proposed, (Segmentation)]{
    \includegraphics[width=0.189\textwidth]{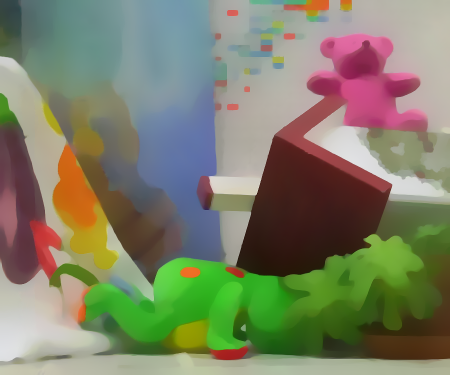}
  }
  \subfloat[][\scriptsize Proposed, (Depth map)]{
    \includegraphics[width=0.189\textwidth]{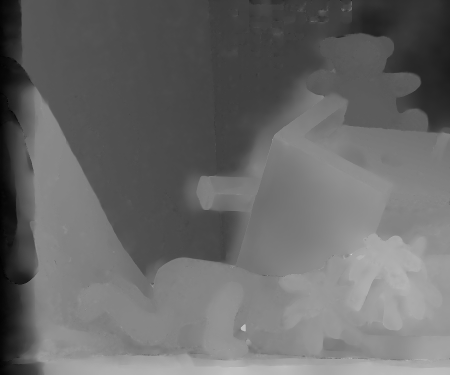}
  }
  \subfloat[][\scriptsize \cite{Strekalovskiy-et-al-cvpr12}, (Segmentation) ]{
    \includegraphics[width=0.189\textwidth]{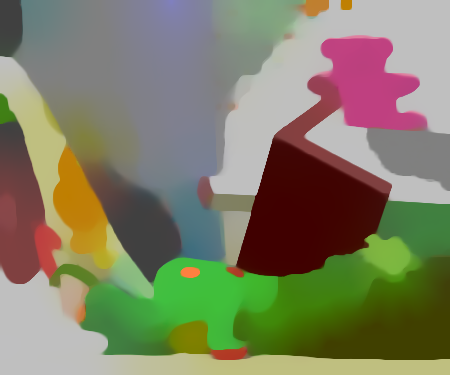}
  }
  \subfloat[][\scriptsize \cite{Strekalovskiy-et-al-cvpr12}, (Depth map)]{
    \includegraphics[width=0.189\textwidth]{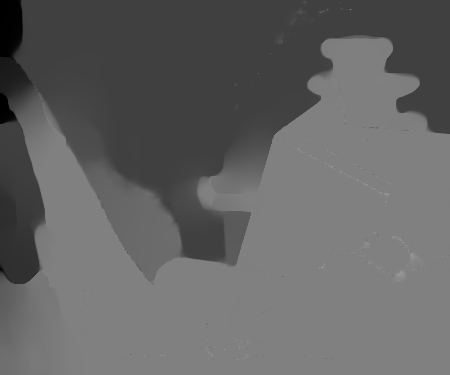}
  }
  \caption{Joint segmentation and stereo matching.
    \textbf{b), c)} Using the proposed discretization we can arrive at
    smooth solutions using a moderate ($5 \times 5 \times 5 \times 5$)
    discretization of the
    $4$-dimensional RGB-D label space. \textbf{d), e)} When using such a coarse sampling of the label space, the classical discretization used in \cite{Strekalovskiy-et-al-cvpr12} leads to a strong
    label bias. Note that with the proposed approach, a piecewise constant segmentation as in \textbf{d)} could also be obtained by increasing the smoothness parameter. \vspace{-3mm}} 
  \label{fig:joint_stereo_segm}
\end{figure*}
}

\newcommand{\figDenoiseSynthetic}{
\begin{figure*}[t!]
  \centering
  \captionsetup[subfloat]{labelformat=empty,justification=centering,singlelinecheck=false}
  \subfloat[][\scriptsize Noisy Input, (PSNR=$10.4$)]{
    \includegraphics[width=0.132\textwidth]{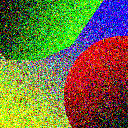}
  }
  \subfloat[][\scriptsize \cite{Strekalovskiy-et-al-cvpr12}, $\ell=2\times2\times2$ (PSNR=$14.7$)]{
    \includegraphics[width=0.133\textwidth]{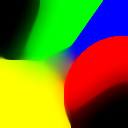}
  }
  \subfloat[][\scriptsize \cite{Strekalovskiy-et-al-cvpr12}, $\ell=4\times4\times4$ (PSNR=$25.0$)]{
    \includegraphics[width=0.133\textwidth]{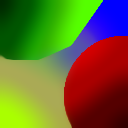}
  }
  \subfloat[][\scriptsize \cite{Strekalovskiy-et-al-cvpr12}, $\ell=6\times6\times6$ (PSNR=$29.3$)]{
    \includegraphics[width=0.133\textwidth]{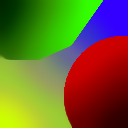}
  }
  \subfloat[][\scriptsize Ours, $\ell=2\times2\times2$, (PSNR=$24.8$)]{
    \includegraphics[width=0.133\textwidth]{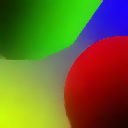}
  }
  \subfloat[][\scriptsize Ours, $\ell=4\times4\times4$, (PSNR=$28.0$)]{
    \includegraphics[width=0.133\textwidth]{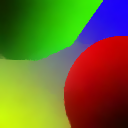}
  }
  \subfloat[][\scriptsize Ours, $\ell=6\times6\times6$, (PSNR=$\mathbf{30.0}$)]{
    \includegraphics[width=0.133\textwidth]{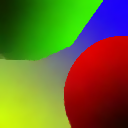}
  }
 \caption{Denoising of a synthetic piecewise smooth image degraded
   with $30\%$ Gaussian noise. The standard discretization of the
   vectorial Mumford-Shah functional shows a strong bias towards the
   chosen labels (see also Figure~\ref{fig:denoise_syntheticii}), while the
   proposed discretization has no bias and leads to the highest
   overall peak signal to noise ratio (PSNR). \vspace{-1mm} }
  \label{fig:denoise_synthetic}
\end{figure*}
}

\newcommand{\figDenoiseSyntheticii}{
\begin{figure}[t!]
  \centering
  \captionsetup[subfloat]{labelformat=empty,justification=centering,singlelinecheck=false}
  \subfloat[][]{
    \includegraphics[width=0.23\textwidth]{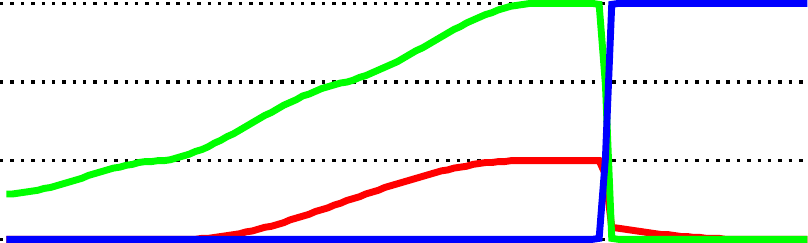}
  }
  \subfloat[][]{
    \includegraphics[width=0.23\textwidth]{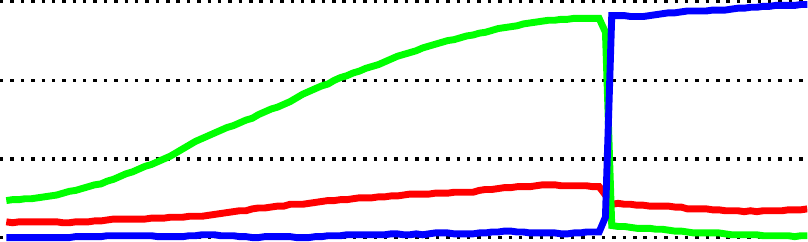}
  }
  \caption{We show a 1D-slice through the resulting image in
    Figure~\ref{fig:denoise_synthetic} (with $\ell=4\times4\times4$). The discretization \cite{Strekalovskiy-et-al-cvpr12} (left)
  shows a strong bias towards the labels, while the proposed
  discretization (right) yields a sublabel-accurate solution.}
  \label{fig:denoise_syntheticii}
\end{figure}
}

\title{\papertitle}

\author{Thomas M\"ollenhoff \qquad Daniel Cremers\\
Technical University of Munich\\
{\tt\small \{thomas.moellenhoff,cremers\}@tum.de}
}

\maketitle

\begin{abstract}
In this work we show how sublabel-accurate multilabeling approaches
\cite{laude16eccv, moellenhoff-laude-cvpr-2016} can be derived by
approximating a classical label-continuous convex relaxation of
nonconvex free-discontinuity problems. This insight allows to extend
these sublabel-accurate approaches from total variation to general
convex and nonconvex regularizations. Furthermore, it leads to a
systematic approach to the discretization of continuous convex
relaxations. We study the relationship to existing discretizations and 
to discrete-continuous MRFs. Finally, we apply the proposed approach to obtain a 
sublabel-accurate and convex solution to the vectorial Mumford-Shah functional
and show in several experiments that it leads to more precise
solutions using fewer labels.
\end{abstract}


\section{Introduction}
\subsection{A class of continuous optimization problems}
Many tasks particularly in low-level computer vision can be formulated as optimization problems over mappings $u : \Omega \to \Gamma$ between sets $\Omega$ and $\Gamma$.
The energy functional is usually designed in such a way that the minimizing argument corresponds to a mapping with the desired solution properties. 
In classical discrete Markov random field (MRF) approaches, which we refer to as \emph{fully discrete optimization}, $\Omega$ is typically a set of nodes (e.g., pixels or superpixels) and $\Gamma$ a set of
labels $\{1, \hdots, \ell\}$. 

However, in many problems such as image denoising, stereo matching or optical flow where $\Gamma \subset \bbR^d$ is naturally modeled as a continuum, this discretization into \emph{labels} can entail 
unreasonably high demands in memory when using a fine sampling, or it leads to a strong label bias when using a coarser sampling, see Figure~\ref{fig:teaser}.
Furthermore, as jump discontinuities are ubiquitous in low-level vision (e.g., caused by object edges, occlusion boundaries, changes in albedo, shadows, etc.), it is important 
to model them in a meaningful manner. By restricting either $\Omega$ or $\Gamma$ to a discrete set, one loses the ability to mathematically distinguish between continuous and discontinuous mappings.
\begin{figure}
  \captionsetup[subfloat]{justification=centering,singlelinecheck=false}
  \subfloat[]
  {
    \includegraphics[width=0.23\textwidth]{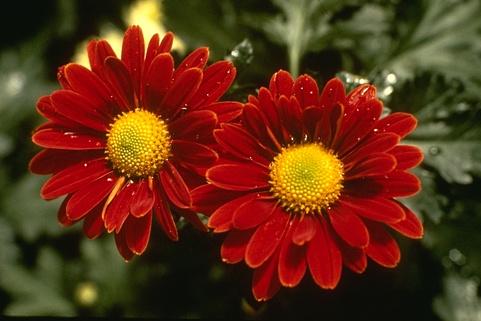}
  }
  \subfloat[]
  {
    \includegraphics[width=0.23\textwidth]{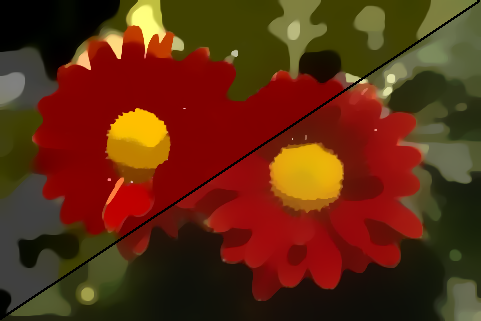}
  }
  \caption{\label{fig:teaser} The classical way to discretize continuous convex relaxations such as 
    the vectorial Mumford-Shah functional \cite{Strekalovskiy-et-al-cvpr12} leads to solutions (\textbf{b)}, top-left) with
    a strong bias towards the chosen labels (here an equidistant $5 \times 5 \times 5$ sampling of the RGB space). This can be seen in the bottom left part of the image, where the green color is
    truncated to the nearest label which is gray. The proposed sublabel-accurate approximation of the continuous relaxation leads to bias-free solutions
    (\textbf{b)}, bottom-right).}
\end{figure}

Motivated by these two points we consider \emph{fully-continuous} optimization approaches, where the idea is to postpone the discretization of $\Omega \subset \bbR^n$ and $\Gamma \subset \bbR$
as long as possible. The prototypical class of continuous optimization problems which we consider in this work are nonconvex free-discontinuity problems, inspired 
by the celebrated Mumford-Shah functional \cite{Blake-Zisserman-87,MumShah}:
\begin{equation}
  \begin{aligned}
  E(u) = &\int_{\Omega \setminus J_u} f \left( x, u(x), \nabla u(x) \right) \mathrm{d}x \\
  + &\int_{J_u}d \left( x, u^-(x), u^+(x), \nu_u(x) \right) \mathrm{d} \mathcal{H}^{n-1}(x).
  \end{aligned}
  \label{eq:unlifted_cont_mshah_general}
\end{equation}
The first integral is defined on the region $\Omega \setminus J_u$ where $u$ is continuous. The integrand $f : \Omega \times \Gamma \times \bbR^n \to [0, \infty]$ 
can be thought of as a combined data term and regularizer, where the regularizer can penalize variations in terms of the 
(weak) gradient $\nabla u$. The second integral is defined on the $(n-1)$-dimensional discontinuity set $J_u \subset \Omega$ and 
$d : \Omega \times \Gamma \times \Gamma \times \Ss^{n-1} \to [0, \infty]$ penalizes jumps from $u^-$ to $u^+$ in unit direction $\nu_u$.
The appropriate function space for \eqref{eq:unlifted_cont_mshah_general} are the \emph{special functions of bounded variation}.
These are functions of bounded variation (cf. Section~\ref{sec:preliminaries} for a defintion) whose distributional derivative $Du$ can be decomposed 
into a continuous part and a jump part in the spirit of \eqref{eq:unlifted_cont_mshah_general}:
\begin{equation}
  Du = \nabla u \cdot \mathcal{L}^n + \left( u^+ - u^- \right) \nu_u \cdot \mathcal{H}^{n-1} \measurerestr J_u,
  \label{eq:decomposition}
\end{equation}
where $\mathcal{L}^n$ denotes the $n$-dimensional Lebesgue measure and $\mathcal{H}^{n-1} \measurerestr J_u$ the $(n-1)$-dimensional Hausdorff measure restricted to the jump set $J_u$.
For an introduction to functions of bounded variation and the study of existence of minimizers to \eqref{eq:unlifted_cont_mshah_general} we refer the interested reader to \cite{BV}. 

Note that due to the possible nonconvexity of $f$ in the first two variables a surprisingly large class of low-level vision problems fits the general framework 
of \eqref{eq:unlifted_cont_mshah_general}. While \eqref{eq:unlifted_cont_mshah_general} is a difficult nonconvex optimization problem, the state-of-the-art are convex relaxations
\cite{ABDM,bouchitte1998,ChJCA}. We give an overview of the idea behind the convex reformulation in
Section~\ref{sec:the_convex_relaxation}.

Extensions to the vectorial setting, i.e., $\dim(\Gamma) > 1$, have been studied by Strekalovskiy \etal in various works
\cite{goldluecke2013tight, Strekalovskiy-et-al-cvpr12,strekalovskiy-et-al-siims14} and recently using the theory of currents by Windheuser~and~Cremers~\cite{windheuser2016convex}. 
The case when $\Gamma$ is a manifold has been considered by Lellmann \etal \cite{lellmann-et-al-iccv2013}. These advances have allowed for a wide range of difficult vectorial 
and joint optimization problems to be solved within a convex framework.

\subsection{Related work}
The first practical implementation of \eqref{eq:unlifted_cont_mshah_general} was proposed by Pock \etal \cite{PCBC-ICCV09}, using a simple finite differencing scheme in both $\Omega$ and
$\Gamma$ which has remained the standard way to discretize convex relaxations. This leads to a strong label bias (see Figure~\ref{fig:teaser}b), top-left) \emph{despite} the
initially label-continuous formulation. 

In the MRF community, a related approach to overcome this label-bias are \emph{discrete-continuous} models (discrete $\Omega$ and continuous $\Gamma$), pioneered by Zach~\etal
\cite{Zach-aistats13,Zach-Kohli-eccv12}. Most similar to the present work is the approach of Fix~and~Agarwal~\cite{fix2014duality}. They derive the discrete-continuous approaches as a discretization
of an infinite dimensional dual linear program. Their approach differs from ours, as we start from a different (nonlinear) infinite-dimensional optimization
problem and consider a representation of the dual variables which enforces continuity. 
The recent work of Bach \cite{bach2015submodular} extends the concept of submodularity from discrete to continuous $\Gamma$ along with complexity estimates.
 
There are also \emph{continuous-discrete} models, i.e. the range $\Gamma$ is discretized into labels but $\Omega$ is kept continuous
\cite{Chambolle-et-al-siims12,Lellmann-Schnoerr-siims11}. Recently, these spatially continuous multilabeling models have been extended to 
allow for so-called \emph{sublabel accurate} solutions \cite{laude16eccv,moellenhoff-laude-cvpr-2016}, i.e., solutions which lie between two labels. These are, however, limited to total variation regularization, due to the separate convexification of data term and regularizer. We show in this work that for general regularizers a joint convex relaxation is crucial.

Finally, while not focus of this work, there are of course also \emph{fully-discrete} approaches, among many \cite{Ishikawa,Schlesinger76,Werner-tpami2007}, which inspired some of the continuous formulations.

\subsection{Contribution}
In this work, we propose an approximation strategy for \emph{fully-continuous} relaxations which retains continuous $\Gamma$ 
even after discretization (see Figure~\ref{fig:teaser}b), bottom-right). We summarize 
our contributions as: 
\begin{itemize}
\item We generalize the work \cite{moellenhoff-laude-cvpr-2016} from total variation to general convex and nonconvex regularization.

\item We prove (see Prop.~\ref{prop:equiv_standard} and Prop.~\ref{prop:equiv_sublabel}) that different approximations to a convex relaxation of \eqref{eq:unlifted_cont_mshah_general} give rise to
  existing relaxations \cite{PCBC-ICCV09} and \cite{moellenhoff-laude-cvpr-2016}. We investigate the relationship to discrete-continuous MRFs in Prop.~\ref{prop:zach_equiv}.

\item On the example of the vectorial Mumford-Shah functional \cite{Strekalovskiy-et-al-cvpr12} we show that our framework yields also sublabel-accurate formulations of extensions to \eqref{eq:unlifted_cont_mshah_general}.
\end{itemize}

\section{Notation and preliminaries}
\label{sec:preliminaries}

We denote the Iverson bracket as $\iver{\cdot}$.
Indicator functions from convex analysis which take on values $0$ and $\infty$ are denoted by $\delta\{ \cdot \}$. We denote by $f^*$ the convex conjugate of $f : \bbR^n \to \bbRext$.
Let $\Omega \subset \bbR^n$ be a bounded open set. For a function $u \in L^1(\Omega; \bbR)$ its total variation is defined by 
\begin{equation}
TV(u) = \sup \left \{ \int_{\Omega} u \Div \varphi ~ \mathrm{d}x : \varphi
\in C_c^{1}(\Omega; \bbR^n) \right \}. 
\end{equation}
The space of functions of bounded variation, i.e., for which $\TV(u) < \infty$ (or equivalently for which the distributional derivative $Du$ is a finite Radon measure) is denoted by $\BV(\Omega; \bbR)$ \cite{BV}. 
We write $u \in \SBV(\Omega; \bbR)$ for functions $u \in \BV(\Omega; \bbR)$ whose distributional derivative admits the decomposition \eqref{eq:decomposition}.
For the rest of this work, we will make the following simplifying assumptions:
\begin{itemize}
  \item The Lagrangian $f$ in \eqref{eq:unlifted_cont_mshah_general}
 is separable, i.e., 
    \begin{equation}
      f(x, t, g) = \rho(x, t) + \eta(x, g),
    \end{equation}
    for possibly nonconvex $\rho : \Omega \times \Gamma \to \bbR$ 
    and regularizers $\eta : \Omega \times \bbR^n \to \bbR$ which are convex in $g$. 
  \item The jump regularizer in \eqref{eq:unlifted_cont_mshah_general}
 is isotropic and induced by a concave function $\kappa : \bbR_{\geq 0} \to \bbR$:
    \begin{equation}
      d(x, u^-, u^+, \nu_u) = \kappa( |u^- - u^+|) \norm{\nu_u}_2,
    \end{equation}
    with $\kappa(a) = 0 \Leftrightarrow a = 0$.

  \item The range $\Gamma = [\gamma_1, \gamma_\ell] \subset \bbR$ is a compact interval.
\end{itemize}

\section{The convex relaxation}
\figGraph
\label{sec:the_convex_relaxation}
In \cite{ABDM,bouchitte1998,ChJCA} the authors propose a convex relaxation for the problem \eqref{eq:unlifted_cont_mshah_general}.
Their basic idea is to reformulate the energy \eqref{eq:unlifted_cont_mshah_general} in terms of 
the \emph{complete graph} of $u$, i.e. lifting the problem to one dimension higher as illustrated 
in Figure~\ref{fig:graph}.
The complete graph $G_u \subset \Omega \times \Gamma$ is defined as the (measure-theoretic) boundary
of the characteristic function of the subgraph $\one{u} : \Omega \times \bbR \to \{0, 1\}$ given by:
\begin{align}
\one{u}(x,t) &= \iver{t < u(x)}.
\end{align}
Furthermore we denote the inner unit normal to $\one{u}$ with $\nu_{G_u}$.
It is shown in \cite{ABDM} that for $u \in \SBV(\Omega; \bbR)$ one has
\begin{equation}
  \begin{aligned}
    E(u) = F(\one{u}) &=  \sup_{\varphi \in \mathcal{K}} ~ \int_{G_u} \iprod{\varphi}{\nu_{G_u}} ~ \mathrm{d} \mathcal{H}^n,
  \end{aligned}
  \label{eq:lifted_cont_mshah_2}
\end{equation} 
with constraints on the dual variables $\varphi \in \mathcal{K}$ given by 
\begin{align}
  \mathcal{K} = \Bigl \{ &(\varphi_x, \varphi_t) \in C_c^1(\Omega
                           \times \bbR; \bbR^n \times \bbR):~ \nonumber\\
                         & \varphi_t(x, t) + \rho(x, t) \geq \eta^*(x, \varphi_x(x, t)), \label{eq:constraints_continuous} \\
                         & \bigl \| \int_{t}^{t'} \varphi_x(x, t) \mathrm{d}t \bigr \|_2 \leq \kappa(|t -  t'|),
                           \forall t, t', \forall x
                           \Bigr \}. \label{eq:constraints_jump}
\end{align}
The functional \eqref{eq:lifted_cont_mshah_2} can be interpreted as the maximum flux of admissible vector fields $\varphi \in \mathcal{K}$ through the cut given by the complete graph $G_u$. 
The set $\mathcal{K}$ can be seen as 
capacity constraints on the flux field $\varphi$. This is reminiscent to constructions from the
discrete optimization community \cite{Ishikawa}. The constraints \eqref{eq:constraints_continuous} correspond to the first integral in \eqref{eq:unlifted_cont_mshah_general} and the non-local constraints \eqref{eq:constraints_jump} to the jump penalization.

Using the fact that the distributional derivative of the subgraph indicator function $\one{u}$ can be written as 
\begin{equation}
  D \one{u} = \nu_{G_u} \cdot \mathcal{H}^m \measurerestr G_u,
\end{equation} 
one can rewrite the energy \eqref{eq:lifted_cont_mshah_2} as
\begin{equation}
  \begin{aligned}
    F(\one{u}) &=  \sup_{\varphi \in \mathcal{K}} ~ \int_{\Omega \times \Gamma} \iprod{\varphi}{D \one{u}}.
  \end{aligned}
  \label{eq:lifted_cont_mshah}
\end{equation} 
A convex formulation is then obtained by relaxing the set of admissible primal variables to a convex set:
\begin{equation}
  \begin{aligned}
  \mathcal{C} = \Bigl \{ &v \in \BV_{\text{loc}}(\Omega \times \bbR; [0,1]) :~ \\
  &v(x, t) = 1 ~~ \forall t \leq \gamma_1, v(x,t) = 0 ~~ \forall t > \gamma_\ell, \\
  &v(x, \cdot) ~ \text{non-increasing} \Bigr \}.
  \end{aligned}
  \label{eq:primal_constraints}
\end{equation}
This set can be thought of as the convex hull of the subgraph functions $\one{u}$. The final optimization problem
is then a convex-concave saddle point problem given by:
\begin{equation}
  \inf_{v \in \mathcal{C}} ~ \sup_{\varphi \in \mathcal{K}} ~ \int_{\Omega \times \bbR} \iprod{\varphi}{Dv}.
  \label{eq:lifted_relaxed_cont_mshah}
\end{equation}
In dimension one ($n = 1$), this convex relaxation is tight \cite{carioni2016discrete,ChJCA}. For $n > 1$ global optimality can be guaranteed by means of a thresholding theorem in case $\kappa \equiv \infty$ \cite{bouchitte2015duality,PCBC-SIIMS}.
If the primal solution $\widehat v \in \mathcal{C}$ to \eqref{eq:lifted_relaxed_cont_mshah} is binary, the global optimum $u^*$ of \eqref{eq:unlifted_cont_mshah_general} can be recovered simply by pointwise thresholding
$\widehat u(x) = \sup \{ t : \widehat v(x, t) > \frac{1}{2} \}$. If $\widehat v$ is not binary, in the general setting it is not clear how to obtain the global optimal solution from the relaxed
solution. An a posteriori optimality bound to the global optimum $E(u^*)$ of \eqref{eq:unlifted_cont_mshah_general} for the thresholded solution $\widehat u$ can be computed by:
\begin{equation}
  | E(\widehat u) - E(u^*) | \leq | F(\one{\widehat u}) - F(\widehat v) |.
\end{equation}
Using that bound, it has been observed that solutions are usually near globally optimal \cite{Strekalovskiy-et-al-cvpr12}.
In the following section, we show how different discretizations of 
the continuous problem \eqref{eq:lifted_relaxed_cont_mshah} lead to various 
existing lifting approaches and to generalizations of the recent sublabel-accurate
continuous multilabeling approach \cite{moellenhoff-laude-cvpr-2016}.

\section{Sublabel-accurate discretization}
\label{sec:sublabel_disc}
\figFEM
\figConstantVsLinear
\subsection{Choice of primal and dual mesh}
In order to discretize the relaxation
\eqref{eq:lifted_relaxed_cont_mshah}, we partition the range $\Gamma = [\gamma_1, \gamma_\ell]$ into $k = \ell - 1$ intervals. 
The individual intervals $\Gamma_i = [\gamma_i,
\gamma_{i+1}]$ form a one dimensional \emph{simplicial complex} (see e.g.,~\cite{Hirani2003}), and we have $\Gamma = \Gamma_1 \cup \hdots \cup \Gamma_k$.
The points $\gamma_i \in \Gamma$ are also
referred to as \emph{labels}. We assume that the
labels are equidistantly spaced with label distance $h = \gamma_{i+1} -
\gamma_i$. The theory generalizes also to non-uniformly spaced labels, as long as the spacing is homogeneous in $\Omega$. 
Furthermore, we define $\gamma_0 = \gamma_1 - h$ and $\gamma_{\ell+1} = \gamma_\ell + h$.

The mesh for dual variables is given by \emph{dual complex}, which is formed by the intervals $\Gamma_i^*
= [\gamma_{i-1}^*, \gamma_{i}^*]$ with nodes $\gamma^*_i
= \frac{\gamma_{i} + \gamma_{i+1}}{2}$. An overview of the notation and the
considered finite dimensional approximations is given in Figure~\ref{fig:fem}.

\subsection{Representation of the primal variable}
As $\one{u}$ is a discontinuous jump function, we consider a piecewise
constant approximation for $v \in \mathcal{C}$,
\begin{equation}
  \Phi^0_i(t) = \iver{ t \in \Gamma_i }, ~ 1 \leq i \leq k,
\end{equation}
see Figure~\ref{fig:fem}a). Due to the boundary conditions in Eq.~\eqref{eq:primal_constraints}, we set $v$ outside of $\Gamma$
to $1$ on the left and $0$ on the right. Note that the non-decreasing constraint in $\mathcal{C}$ is implicitly realized as $\varphi_t \in \mathcal{K}$ can be arbitrarily large. 

For coefficients $\hat v : \Omega \times \{ 1, \hdots, k \} \to \bbR$ we have
\begin{equation}
  v(x,t) = \sum_{i=1}^k \hat v(x,i) \Phi^0_i(t).
  \label{eq:ansatz_v}
\end{equation}
As an example of this representation, consider the approximation of $\one{u}$ at point $p$ shown in Figure~\ref{fig:graph}:
\begin{equation}
  \begin{aligned}
    \widehat v(p, \cdot) &= \sum_{i=1}^k e_i \int_{\Gamma} \Phi^0_i(t) \one{u}(p, t) \mathrm{d}t \\
    &= h \cdot \begin{bmatrix} 1 & 1 & 0.4 & 0 \end{bmatrix}^\top.
  \end{aligned}
  \label{eq:sublabel_inter}
\end{equation}
This leads to the sublabel-accurate representation also considered in \cite{moellenhoff-laude-cvpr-2016}. In that work, the representation from the above example \eqref{eq:sublabel_inter} encodes a convex combination
between the labels $\gamma_3$ and $\gamma_4$ with interpolation factor $0.4$. 
Here it is motivated from a different perspective: 
we take a finite dimensional subspace approximation of the infinite dimensional optimization problem \eqref{eq:lifted_relaxed_cont_mshah}.

\subsection{Representation of the dual variables}
\subsubsection{Piecewise constant $\varphi_t$}
The simplest discretization of the dual variable $\varphi_t$ is
to pick a piecewise constant approximation on the dual intervals $\Gamma_i^*$
as shown in Figure~\ref{fig:fem}b):
The functions are given by 
\begin{equation}
  \Psi^0_i(t) = \iver{ t \in \Gamma_i^* }, ~ 1 \leq i \leq \ell,
\end{equation}
As $\varphi$ is a vector
field in $C_c^1$, the functions $\Psi$ vanish
outside of $\Gamma$. For coefficient functions $\hat \varphi_t : \Omega \times \{1, \hdots, \ell\} \to \bbR$ and $\hat \varphi_x : \Omega \times \{1, \hdots, k\} \to \bbR^n$ we have:
\begin{equation}
  \varphi_t(t) = \sum_{i=1}^\ell \hat \varphi_t(i) \Psi^0_i(t), ~ \varphi_x(t) = \sum_{i=1}^k \hat \varphi_x(i) \Phi^0_i(t).
  \label{eq:ansatz_phi_0}
\end{equation}
To avoid notational clutter, we dropped $x \in \Omega$ in \eqref{eq:ansatz_phi_0} and will do 
so also in the following derivations. Note that for $\varphi_x$ we chose the same piecewise constant approximation as for $v$, 
as we keep the model continuous in $\Omega$, and ultimately
discretize it using finite differences in $x$. 

\paragraph{Discretization of the constraints}
 In the following, we will plug in the finite dimensional approximations into the constraints from the set
$\mathcal{K}$.
We start by reformulating the constraints in \eqref{eq:constraints_continuous}.
Taking the infimum over $t \in \Gamma_i$ they can be equivalently written
as:
\begin{equation}
  \inf_{t \in \Gamma_i} ~ \varphi_t(t) + \rho(t) - \eta^* \left( \varphi_x(t) \right) \geq 0, ~ 1 \leq i \leq \ell.
  \label{eq:separable_constraints_infimum}
\end{equation}
Plugging in the approximation \eqref{eq:ansatz_phi_0}
into the above leads to the following constraints for $1 \leq i \leq k$:
\begin{equation}
  \begin{aligned}
    \hat \varphi_t(i) + &\inf_{t \in [\gamma_i, \gamma_i^*]} \rho(t) \geq 
  \eta^*(\hat \varphi_x(i)), \\ 
    \hat \varphi_t(i + 1) + &\underbrace{\inf_{t \in [\gamma_i^*, \gamma_{i+1}]} \rho(t)}_{\text{min-pooling}} \geq 
  \eta^*(\hat \varphi_x(i)).
   \end{aligned}
  \label{eq:discretized_separable_constraints_00}
\end{equation}
These constraints can be seen as min-pooling of the continuous unary potentials in a symmetric region centered on the 
label $\gamma_i$. To see that more easily, assume one-homogeneous regularization so that $\eta^* \equiv 0$ on its domain. Then two consecutive constraints from \eqref{eq:discretized_separable_constraints_00} can be combined into one where the infimum of $\rho$ is taken over $\Gamma_i^* = [\gamma_i^*, \gamma_{i+1}^*]$ centered the label $\gamma_i$. This leads to capacity constraints for the flow in vertical direction $-\hat \varphi_t(i)$ of the form 
\begin{equation}
  -\hat \varphi_t(i) \leq \inf_{t \in \Gamma_i^*} \rho(t), ~ 2 \leq i \leq \ell - 1,
\end{equation}
as well as similar constraints on $\hat \varphi_t(1)$ and $\hat \varphi_t(\ell)$.
The effect of this on a nonconvex energy is shown in Figure~\ref{fig:constant_vs_linear} on the left.
The constraints \eqref{eq:discretized_separable_constraints_00} are convex inequality constraints, which can be implemented
using standard proximal optimization methods and
orthogonal projections onto the epigraph $\epi(\eta^*)$ as described in \cite[Section~5.3]{PCBC-SIIMS}.

For the second part of the constraint set
\eqref{eq:constraints_jump} we insert again the finite-dimensional
representation \eqref{eq:ansatz_phi_0}  to arrive at:
\begin{equation}
  \begin{aligned}
    &\bigl \| (1-\alpha) \hat \varphi_x(i) + \sum_{l=i+1}^{j-1} \hat
    \varphi_x(l) + \beta \hat \varphi_x(j) \bigr \| \\
    &\quad \leq \frac{\kappa( \gamma_j^\beta
 - \gamma_i^\alpha)}{h}, ~ \forall \, 1 \leq i \leq j \leq k, \alpha, \beta \in [0,1],
  \end{aligned}
  \label{eq:infinite_jump_constraints}
\end{equation}
where $\gamma_i^\alpha := (1-\alpha)\gamma_i + \alpha
\gamma_{i+1}$. 
These are infinitely many constraints, but similar to \cite{moellenhoff-laude-cvpr-2016}
these can be implemented with finitely many constraints.
\begin{prop}
  For concave $\kappa : \bbR^+_0 \to \bbR$ with $\kappa(a)=0 \Leftrightarrow a = 0$, the constraints \eqref{eq:infinite_jump_constraints}
  are equivalent to
  \begin{equation}
    \bigl \| \sum_{l=i}^j \hat \varphi_x(l) \bigr \| \leq \frac{\kappa(\gamma_{j+1} - \gamma_i)}{h}, ~ \forall 1 \leq i \leq j \leq k.
    \label{eq:kappa_constraints}
  \end{equation}
   \label{prop:kappa_constraints}
\end{prop}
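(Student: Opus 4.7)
The plan is to prove set equality between the two constraint families. One inclusion is immediate: specializing $(\alpha,\beta) = (0,1)$ in \eqref{eq:infinite_jump_constraints} reproduces \eqref{eq:kappa_constraints} verbatim, so every $\hat\varphi_x$ satisfying the infinite family automatically satisfies the finite one.

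For the converse, fix $i < j$ and $\alpha, \beta \in [0,1]$, and abbreviate $S_{p,q} := \sum_{l=p}^{q} \hat\varphi_x(l)$. The key observation is that the bracketed expression on the left of \eqref{eq:infinite_jump_constraints} is the bilinear convex combination
\begin{equation*}
(1-\alpha)(1-\beta)\, S_{i,j-1} + (1-\alpha)\beta\, S_{i,j} + \alpha(1-\beta)\, S_{i+1,j-1} + \alpha\beta\, S_{i+1,j},
\end{equation*}
obtained by reading off its values at the four corners $(\alpha,\beta) \in \{0,1\}^2$ (the coefficients are nonnegative and sum to one). Applying the triangle inequality together with the finite bounds \eqref{eq:kappa_constraints} on each $\|S_{p,q}\|$ yields an upper bound that is the same convex combination, divided by $h$, of the four quantities $\kappa(\gamma_j - \gamma_i)$, $\kappa(\gamma_{j+1} - \gamma_i)$, $\kappa(\gamma_j - \gamma_{i+1})$, $\kappa(\gamma_{j+1} - \gamma_{i+1})$.

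The final step exploits concavity of $\kappa$. A direct computation with $\gamma_l^\theta = (1-\theta)\gamma_l + \theta\gamma_{l+1}$ shows that $\gamma_j^\beta - \gamma_i^\alpha$ is precisely the \emph{same} bilinear convex combination of the four corner lengths $\gamma_j - \gamma_i$, $\gamma_{j+1} - \gamma_i$, $\gamma_j - \gamma_{i+1}$, $\gamma_{j+1} - \gamma_{i+1}$, so Jensen's inequality for concave $\kappa$ upgrades the upper bound to $\kappa(\gamma_j^\beta - \gamma_i^\alpha)/h$, which is exactly \eqref{eq:infinite_jump_constraints}. The only subtleties are the edge cases: when $j = i+1$, $S_{i+1,j-1}$ is an empty sum while the corresponding corner length $\gamma_j - \gamma_{i+1}$ vanishes, consistent with $\kappa(0) = 0$; when $j = i$, the constraint reduces to $\|(\beta-\alpha)\hat\varphi_x(i)\| \leq \kappa(|\beta-\alpha|h)/h$, which follows from the finite bound $\|\hat\varphi_x(i)\| \leq \kappa(h)/h$ combined with the inequality $t\,\kappa(h) \leq \kappa(t h)$ for $t \in [0,1]$ (immediate from concavity together with $\kappa(0) = 0$). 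I expect the main obstacle to be cleanly spotting this bilinear convex-combination structure; once in hand, the same combination simultaneously controls the norm on the left and, via Jensen, the length argument of $\kappa$ on the right.
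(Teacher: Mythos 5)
Your proof is correct and follows essentially the same route as the paper's: both rest on writing the left-hand side as a convex combination of the corner sums, applying the triangle inequality together with the finite bounds \eqref{eq:kappa_constraints}, and then invoking concavity of $\kappa$ on the identical combination of interval lengths — you merely collapse the paper's two-stage interpolation (first in $\alpha$ with $\beta\in\{0,1\}$, then in $\beta$) into a single four-corner bilinear combination. Your explicit handling of the degenerate cases $j=i$ and $j=i+1$ is a small point the paper's proof glosses over.
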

\begin{proof}
  Proofs are given in the appendix.
\end{proof}
This proposition reveals that only information from the labels $\gamma_i$ enters 
into the jump regularizer $\kappa$. For $\ell=2$ we expect all regularizers to behave like the total variation.

\paragraph{Discretization of the energy}
For the discretization of the saddle point energy \eqref{eq:lifted_relaxed_cont_mshah}
we apply the divergence theorem 
\begin{equation}
  \int_{\Omega \times \bbR} \iprod{\varphi}{Dv} = \int_{\Omega \times \bbR} -\Div \varphi \cdot v ~ \mathrm{d}t ~ \mathrm{d}x,
  \label{eq:lifted_relaxed_cont_mshah_weak}
\end{equation}
and then discretize the divergence by inserting the piecewise constant representations of
$\varphi_t$ and $v$:
\begin{equation}
  \begin{aligned}
    &\int_{\bbR} -\partial_t \varphi_t(t) v(t) ~ \mathrm{d}t =\\
    &-\hat \varphi_t(1) - \sum_{i=1}^k \hat v(i) \left[\hat \varphi_t(i + 1) -
      \hat \varphi_t(i) \right].
  \end{aligned}
  \label{eq:div_phit}
\end{equation}
The discretization of the other parts of the divergence are given as
the following:
\begin{equation}
  \begin{aligned}
    &\int_{\bbR} -\partial_{x_j} \varphi_x(t) v(t) ~ \mathrm{d}t = 
    -h \sum_{i=1}^k \partial_{x_j} \hat \varphi_x(i) \hat v(i),
  \end{aligned}
  \label{eq:div_phix}
\end{equation}
where the spatial derivatives
$\partial_{x_j}$ are ultimately discretized using standard finite differences.
It turns out that the above discretization can be related to the one from \cite{PCBC-ICCV09}:
\begin{prop}
For convex one-homogeneous $\eta$ the discretization with piecewise constant $\varphi_t$ and $\varphi_x$ leads 
to the traditional discretization as proposed in \cite{PCBC-ICCV09}, except with min-pooled instead of sampled unaries. 
\label{prop:equiv_standard}
\end{prop}

\subsubsection{Piecewise linear $\varphi_t$}
As the dual variables in $\mathcal{K}$ are continuous vector fields, 
a more faithful approximation is given by a continuous piecewise linear approximation,
given for $1 \leq i \leq \ell$ as:
\begin{equation}
  \Psi_i^1(t) = 
  \begin{cases}
    \frac{t - \gamma_{i-1}}{h}, &\text{ if } t \in [\gamma_{i-1}, \gamma_{i}],\\
    \frac{\gamma_{i+1} - t}{h}, &\text{ if } t \in [\gamma_{i}, \gamma_{i + 1}],\\
    0 &\text{ otherwise.}
  \end{cases}
  \label{eq:pw_lin_basis}
\end{equation}
They are shown in Figure~\ref{fig:fem}c), and we set:
\begin{equation}
  \varphi_t(t) = \sum_{i=1}^\ell \hat \varphi_t(i) \Psi_i^1(t).
  \label{eq:piecw_linear_phit}
\end{equation}
Note that the piecewise linear 
dual representation considered by Fix~\etal in \cite{fix2014duality} differs in this point,
as they do not ensure a continuous representation. Unlike the proposed approach their approximation does not take 
a true subspace of the original infinite dimensional function space.

\paragraph{Discretization of the constraints}
We start from the reformulation \eqref{eq:separable_constraints_infimum} of the original constraints \eqref{eq:constraints_continuous}.
With \eqref{eq:piecw_linear_phit}
for $\varphi_t$ and \eqref{eq:ansatz_phi_0} for $\varphi_x$, we have for $1 \leq i \leq k$:
\begin{equation}
  \begin{aligned}
    \inf_{t \in \Gamma_i} ~&\hat \varphi_t(i) \frac{\gamma_{i+1} -
      t}{h} + \hat \varphi_t(i+1)
    \frac{t - \gamma_{i}}{h} \\
    &+ \rho(t) \geq \eta^*(\hat \varphi_x(i)).
  \end{aligned}
  \label{eq:constraints_sublabel}
\end{equation}
While the constraints \eqref{eq:constraints_sublabel} seem difficult to implement,
they can be reformulated in a simpler way involving $\rho^*$.
\begin{prop}
The constraints \eqref{eq:constraints_sublabel} can be equivalently reformulated by 
introducing additional variables $a \in \bbR^k$, $b \in \bbR^k$, where $\forall i \in \{ 1,  \hdots, k \}$:
\begin{equation}
  \begin{aligned}
 &r(i) = (\hat \varphi_t(i) - \hat \varphi_t(i+1)) / h,\\
 &a(i) + b(i) - (\hat \varphi_t(i) \gamma_{i+1} - \hat \varphi_t(x, i+1)
   \gamma_{i}) / h = 0,\\ 
   &r(i) \geq \rho_i^* \left( a(i) \right), \hat \varphi_x(i) \geq \eta^* \left( b(i) \right),
  \end{aligned}
  \label{eq:piecw_lin_constraints}
\end{equation}
with $\rho_i(x, t) = \rho(x, t) + \delta\{ t \in \Gamma_i \}$.
\end{prop}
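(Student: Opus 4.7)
The plan is to exploit the affine structure of $\varphi_t$ on each interval $\Gamma_i$ together with a Fenchel conjugate rewriting of the infimum in \eqref{eq:constraints_sublabel}. On $\Gamma_i$ the piecewise-linear ansatz \eqref{eq:piecw_linear_phit} reduces to an affine function $\varphi_t(t) = c(i) - r(i)\, t$, where $-r(i) = (\hat\varphi_t(i+1)-\hat\varphi_t(i))/h$ is the slope and $c(i) = (\hat\varphi_t(i)\gamma_{i+1} - \hat\varphi_t(i+1)\gamma_i)/h$ is the intercept. Substituting this into \eqref{eq:constraints_sublabel} and pulling the intercept out of the infimum rewrites the constraint as
\begin{equation*}
c(i) - \sup_{t \in \Gamma_i}\bigl[\, r(i)\, t - \rho(t)\,\bigr] \;\geq\; \eta^*(\hat\varphi_x(i)).
\end{equation*}
Using the definition $\rho_i = \rho + \delta\{\cdot \in \Gamma_i\}$, the supremum is exactly $\rho_i^*(r(i))$, so the constraint collapses to the single joint bound $c(i) \geq \rho_i^*(r(i)) + \eta^*(\hat\varphi_x(i))$.

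The second step is epigraphical splitting, which turns the joint bound into two independent epigraph conditions that are implementable by proximal projections. I would introduce scalar slacks $a(i),b(i)$ linked by the affine identity $a(i)+b(i) = c(i)$, which is precisely the second line of \eqref{eq:piecw_lin_constraints} once the intercept is expanded, and separately require that $a(i)$ dominates $\rho_i^*(r(i))$ and $b(i)$ dominates $\eta^*(\hat\varphi_x(i))$. One direction is immediate: summing the two inequalities recovers the joint bound. For the converse, assume the joint bound holds and set $a(i) := \rho_i^*(r(i))$ and $b(i) := c(i) - a(i)$; since $\Gamma_i$ is a nonempty compact interval on which $\rho$ is bounded below, $\rho_i^*(r(i))$ is finite, so $a(i)$ is well-defined and both sub-constraints are satisfied. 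The definition of $r(i)$ in the proposition, $(\hat\varphi_t(i)-\hat\varphi_t(i+1))/h$, coincides with the extracted slope, closing the identification.

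The only real obstacle is bookkeeping: getting the sign conventions right, and confirming that $\rho_i^*$ is proper so the splitting is always feasible. Neither step uses anything beyond the definition of the convex conjugate, and since the identification is carried out pointwise in $x$, the argument extends verbatim to spatially varying $\rho$ and $\eta$.
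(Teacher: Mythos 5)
Your proposal is correct and follows essentially the same route as the paper's proof: rewrite the infimum over $\Gamma_i$ as $c(i) - \rho_i^*(r(i))$ using the conjugate of $\rho_i = \rho + \delta\{\cdot \in \Gamma_i\}$, then split the resulting joint bound $c(i) \geq \rho_i^*(r(i)) + \eta^*(\hat\varphi_x(i))$ into two epigraph conditions tied together by the affine identity $a(i)+b(i)=c(i)$, with the converse handled by the tight choice $a(i)=\rho_i^*(r(i))$ (the paper phrases this with nonnegative slacks $\xi_1,\xi_2$, which is the same construction). Note that your reading $a(i) \geq \rho_i^*(r(i))$, $b(i) \geq \eta^*(\hat\varphi_x(i))$ matches the epigraph form $(r(i),a(i)) \in \epi(\rho_i^*)$, $(\hat\varphi_x(i),b(i)) \in \epi(\eta^*)$ used in the paper's own proof, rather than the literal (apparently typographically swapped) inequalities in the proposition statement.
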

The constraints \eqref{eq:piecw_lin_constraints} are implemented by projections onto the epigraphs of $\eta^*$ and $\rho_i^*$, as they can be written as:
\begin{equation}
    (r(i), a(i)) \in \epi (\rho_i^*),~ (\hat \varphi_x(i), b(i)) \in \epi(\eta^*).
\end{equation}
Epigraphical projections for quadratic and piecewise linear $\rho_i$ are described in
\cite{moellenhoff-laude-cvpr-2016}. In Section~\ref{sec:piecw_quad} we describe how to 
implement piecewise quadratic $\rho_i$. 
As the convex conjugate of $\rho_i$ enters into the constraints, it becomes
clear that this discretization only sees the \emph{convexified} unaries on each
interval, see also the right part of Figure~\ref{fig:constant_vs_linear}.

\paragraph{Discretization of the energy} It turns out that the piecewise linear representation of $\varphi_t$ leads
to the same discrete bilinear saddle point term as \eqref{eq:div_phit}. The other term remains unchanged, as we pick the 
same representation of $\varphi_x$.

\paragraph{Relation to existing approaches} In the following we point out the relationship of the approximation with piecewise linear $\varphi_t$ to
the sublabel-accurate multilabeling approaches \cite{moellenhoff-laude-cvpr-2016} and the discrete-continuous MRFs \cite{Zach-Kohli-eccv12}.
\begin{prop}
The discretization with piecewise linear $\varphi_t$ and piecewise constant $\varphi_x$, together with the 
choice $\eta(g) = \norm{g}$ and $\kappa(a) = a$ is equivalent to the relaxation \cite{moellenhoff-laude-cvpr-2016}.
\label{prop:equiv_sublabel}
\end{prop}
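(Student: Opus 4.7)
The plan is to establish the equivalence by starting from the saddle-point problem produced by our discretization under the stated choices and then manipulating it into the exact dual form of the sublabel-accurate total variation lifting of \cite{moellenhoff-laude-cvpr-2016}. Since both formulations share the same sublabel primal representation \eqref{eq:ansatz_v} (this was already observed in the discussion around Eq.~\eqref{eq:sublabel_inter}), the work is entirely on the dual side: match the bilinear pairing, the pointwise data constraints, and the jump constraints separately.

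First, I would simplify the jump constraints \eqref{eq:kappa_constraints} under $\kappa(a)=a$. Since $\gamma_{j+1}-\gamma_i=h(j-i+1)$, the family reduces to
\begin{equation}
\bigl\|\textstyle\sum_{l=i}^{j}\hat\varphi_x(l)\bigr\|\leq j-i+1,\qquad 1\leq i\leq j\leq k.
\end{equation}
The diagonal $i=j$ gives $\|\hat\varphi_x(i)\|\leq 1$, and the triangle inequality shows these pointwise constraints imply all the others; hence the jump capacity collapses to exactly the unit-ball constraint on each $\hat\varphi_x(i)$, which is the TV-constraint used in \cite{moellenhoff-laude-cvpr-2016}. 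Next, with $\eta(g)=\|g\|$ we have $\eta^*=\delta\{\|\cdot\|_*\leq 1\}$, so the epigraph condition $\hat\varphi_x(i)\geq\eta^*(b(i))$ in \eqref{eq:piecw_lin_constraints} is automatically satisfied by taking $b(i)=0$, which lets us eliminate $b(i)$. The reformulated continuous constraint \eqref{eq:piecw_lin_constraints} then collapses to
\begin{equation}
\bigl(r(i),\,(\hat\varphi_t(i)\gamma_{i+1}-\hat\varphi_t(i+1)\gamma_i)/h\bigr)\in\epi(\rho_i^*),
\end{equation}
with $r(i)=(\hat\varphi_t(i)-\hat\varphi_t(i+1))/h$. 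After identifying $\hat\varphi_t(i)$ with the node variables used in \cite{moellenhoff-laude-cvpr-2016}, this is precisely the epigraph-of-conjugate constraint on the restriction $\rho_i=\rho+\delta\{\cdot\in\Gamma_i\}$ that appears in their lifting; i.e., only the convex envelope of $\rho$ on each interval $\Gamma_i$ enters the relaxation (cf. Fig.~\ref{fig:constant_vs_linear}, right).

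Finally, the bilinear energy term obtained in Eqs.~\eqref{eq:div_phit}--\eqref{eq:div_phix} is linear in $\hat v$ and coincides, after reading off the coefficients of $\hat v(i)$, with the sublabel pairing of \cite{moellenhoff-laude-cvpr-2016} in which the $t$-component contributes the data-term coupling and the $x$-component contributes the lifted TV coupling. Combining the three steps, the discretized saddle problem is identical, variable-for-variable and constraint-for-constraint, to the relaxation of \cite{moellenhoff-laude-cvpr-2016}.

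The main obstacle is the constraint bookkeeping: care is needed when eliminating the auxiliary $a(i),b(i)$ in \eqref{eq:piecw_lin_constraints} to check that the resulting epigraph condition on $\rho_i^*$ indeed reproduces the pointwise convexification used in \cite{moellenhoff-laude-cvpr-2016}, rather than a strictly larger or smaller convex set; in particular one must verify that the affine combination $(\hat\varphi_t(i)\gamma_{i+1}-\hat\varphi_t(i+1)\gamma_i)/h$ plays the role of the ``slope-intercept'' dual variable from that paper. The collapse of the jump constraints and the matching of the bilinear form are then straightforward.
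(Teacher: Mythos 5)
Your proposal is correct and follows essentially the same route as the paper's own proof: the one-homogeneity of $\eta$ decouples the constraints into a data part handled via $\rho_i^*$ and a domain/unit-ball condition on $\hat\varphi_x$, the jump constraints with $\kappa(a)=a$ collapse to pointwise unit-ball constraints (you argue this directly by the triangle inequality, the paper by invoking \cite[Proposition~5]{moellenhoff-laude-cvpr-2016} after a rescaling), and the bilinear term is matched coefficient-by-coefficient. The only step you leave implicit is the explicit change of variables $\vl_i=\hat\varphi_t(i)-\hat\varphi_t(i+1)$, $q=\hat\varphi_t(1)$ and the resulting telescope sum, which is precisely the ``slope--intercept'' identification you flag as the remaining obstacle and which the paper carries out to land exactly on the constraints and cost of \cite{moellenhoff-laude-cvpr-2016}.
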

Thus we extend the relaxation proposed in \cite{moellenhoff-laude-cvpr-2016} to more general regularizations. 
The relaxation \cite{moellenhoff-laude-cvpr-2016} was derived starting from a discrete label space
and involved a separate relaxation of data term and regularizer. To see this, first note that the convex conjugate 
of a convex one-homogeneous function is the indicator function of a 
convex set \cite[Corollary~13.2.1]{Rockafellar:ConvexAnalysis}. Then the constraints \eqref{eq:constraints_continuous} can be written as
\begin{align}
  -\varphi_t(x, t) &\leq \rho(x, t), \label{eq:split_dataterm} \\
  \varphi_x(x, t) &\in \dom \{ \eta^* \}, \label{eq:split_regularizer}
\end{align}
where \eqref{eq:split_dataterm} is the data term and \eqref{eq:split_regularizer} the regularizer.
This provides an intuition why the separate convex relaxation of data term and regularizer in \cite{moellenhoff-laude-cvpr-2016} worked well. However, for general choices of $\eta$ a joint relaxation of data term and regularizer as in \eqref{eq:constraints_sublabel} is crucial. 
The next proposition establishes the relationship between the data term from \cite{Zach-Kohli-eccv12} and the one from \cite{moellenhoff-laude-cvpr-2016}.
\begin{prop}
  The data term from \cite{moellenhoff-laude-cvpr-2016} (which is in turn a special case of the discretization with piecewise linear $\varphi_t$) can be (pointwise) brought into the primal form
  \begin{equation}
    \mathcal{D}(\widehat v) = \inf_{\substack{x_i \geq 0,\sum_i x_i=1\\\widehat v = y / h + I^\top x}} ~ \sum_{i=1}^k x_i \rho_i^{**} \left(\frac{y_i}{x_i} \right),
    \label{eq:dataterm_zach}
  \end{equation}
  where $I \in \bbR^{k \times k}$ is a discretized integration operator. 
  \label{prop:zach_equiv}
\end{prop}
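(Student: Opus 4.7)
The plan is to dualize the sublabel-accurate data term and identify the resulting expression with \eqref{eq:dataterm_zach}. Isolating the data part of \eqref{eq:piecw_lin_constraints} (i.e.\ setting $b(i)=0$), the dual form of the data term reads
\[
\mathcal{D}(\hat v) \;=\; \sup_{\hat\varphi_t \in \mathbb{R}^\ell} \, \langle c(\hat v),\, \hat\varphi_t\rangle \quad \text{s.t.} \quad (r(i), a(i)) \in \epi(\rho_i^*), \; i = 1, \ldots, k,
\]
where the coefficient vector $c(\hat v) \in \mathbb{R}^\ell$ is read off the vertical flux term \eqref{eq:div_phit} under the boundary conventions $\hat v_0 = 1$, $\hat v_{k+1} = 0$, and the invertible affine map $T_i : (\hat\varphi_t(i), \hat\varphi_t(i+1)) \mapsto (r(i), a(i))$ is the one defined in \eqref{eq:piecw_lin_constraints}. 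Since the objective is linear and the feasible set convex, Fenchel duality will convert this to an infimum of a sum of perspective functions.

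My first step is to decompose the feasible set as an intersection of $k$ constraints, each touching only a consecutive pair $(\hat\varphi_t(i), \hat\varphi_t(i+1))$, and to apply the rule that the conjugate of a sum of indicators is the infimal convolution of their conjugates. This yields an infimum over auxiliary pairs $\{(p_i, q_i)\}_{i=1}^k$ linearly constrained so as to reassemble into $c(\hat v)$, with summands $\sigma_{\epi(\rho_i^*)}(T_i^{-\top}(p_i, q_i))$. My second step is to invoke the standard identity $\sigma_{\epi(f)}(\alpha, -\beta) = \beta\, f^*(\alpha/\beta)$, valid for $\beta > 0$ with the usual recession convention at $\beta = 0$, applied to $f = \rho_i^*$. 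After the change of variables $(\tilde y_i, x_i) := T_i^{-\top}(p_i, q_i)$, each summand becomes the perspective $x_i \rho_i^{**}(\tilde y_i / x_i)$, and the sign condition forces $x_i \geq 0$.

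The third step is purely linear bookkeeping: a telescoping of the reassembly constraint using the boundary conventions gives $\sum_i x_i = 1$, and parametrizing the remaining $k-1$ degrees of freedom recovers the affine relation $\hat v = y/h + I^\top x$, in which $I$ is a discrete cumulative-sum operator and the shift by $\gamma_i$ hidden inside $T_i^{-\top}$ is absorbed into the definition $y_i := \tilde y_i - \gamma_i x_i$ so that the argument of $\rho_i^{**}$ is exactly $y_i/x_i$. The main obstacle is precisely this bookkeeping: explicitly identifying $I$ and verifying that the boundary contributions at $j = 1$ and $j = \ell$ yield the unit-sum constraint rather than producing stray terms. A minor technical point is the degenerate case $x_i = 0$, handled by taking the lsc closure of the perspective function, equivalently by invoking the recession function of $\rho_i^{**}$.
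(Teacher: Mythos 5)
Your proposal follows essentially the same route as the paper: both dualize the support-function (dual) form of the data term via Fenchel/Lagrangian duality and use the identity that the support function of $\epi(\rho_i^*)$ is the lsc closure of the perspective of $\rho_i^{**}$, with the simplex constraint $\sum_i x_i = 1$ emerging from the single free additive degree of freedom in $\hat \varphi_t$. The only difference is cosmetic---the paper performs the telescoping substitution $\vl_i = \hat \varphi_t(i) - \hat \varphi_t(i+1)$, $q = \hat \varphi_t(1)$ \emph{before} dualizing, so the unit-sum constraint falls out of maximizing over $q$ alone, whereas you dualize first and telescope afterward---and the bookkeeping you defer (identifying $I$ and checking the boundary terms) is routine and does close.
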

 The data term of Zach and Kohli \cite{Zach-Kohli-eccv12} is precisely given by \eqref{eq:dataterm_zach} except that the optimization is directly performed on  $x,y \in \bbR^k$. The
variable $x$ can be interpreted as 1-sparse indicator of the interval $\Gamma_i$ and $y \in \bbR^k$ as a sublabel offset. The constraint $\widehat v = y / h + I^\top x$ connects this representation to
the subgraph representation $\widehat v$ via the operator $I \in \bbR^{k \times k}$ (see appendix for the definition). For general regularizers $\eta$, the discretization with piecewise
linear $\varphi_t$ differs from \cite{moellenhoff-laude-cvpr-2016} as we perform a \emph{joint convexification} of data term and regularizer and from \cite{Zach-Kohli-eccv12} as we consider the spatially continuous setting.
Another important question to ask is which primal formulation is actually optimized 
after discretization with piecewise linear $\varphi_t$. In particular the distinction 
between jump and smooth regularization only makes sense for continuous label spaces, so 
it is interesting to see what is optimized after discretizing the label space.
\begin{prop}
  Let $\gamma  = \kappa(\gamma_2 - \gamma_1)$ and $\ell = 2$. The approximation with piecewise linear $\varphi_t$ and piecewise constant $\varphi_x$ of the continuous optimization problem \eqref{eq:lifted_relaxed_cont_mshah} is equivalent to
  \begin{equation}
    \inf_{u : \Omega \to \Gamma} \int_{\Omega} \rho^{**}(x, u(x)) + (\eta^{**}
    ~ \square ~ \gamma \norm{\cdot})
  (\nabla u(x)) ~\mathrm{d}x,
  \label{eq:unlifted_prob}
  \end{equation}
  where $(\eta ~ \square ~ \gamma \norm{\cdot})(x) = \inf_{y} ~ \eta(x - y) + \gamma \norm{y}$ denotes the infimal convolution (cf. \cite[Section~5]{Rockafellar:ConvexAnalysis}).
  \label{prop:infconv}
\end{prop}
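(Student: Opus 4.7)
The plan is to specialize the discretized saddle-point problem to $\ell = 2$, carry out the pointwise maximization over the dual variables in closed form, and identify the resulting primal energy with \eqref{eq:unlifted_prob}. For $\ell = 2$ the label space collapses to the single interval $\Gamma_1 = [\gamma_1,\gamma_2]$ with $h = \gamma_2 - \gamma_1$, leaving one primal coefficient $w(x) := \hat v(x,1) \in [0,1]$ and dual unknowns $s_i := \hat\varphi_t(x,i)$ for $i = 1,2$ and $q := \hat\varphi_x(x,1) \in \bbR^n$. Substituting the piecewise linear $\varphi_t$-basis and the piecewise constant $v$- and $\varphi_x$-bases into \eqref{eq:lifted_relaxed_cont_mshah_weak} via \eqref{eq:div_phit}--\eqref{eq:div_phix} and integrating by parts in $x$, the bilinear pairing becomes $\int_\Omega \bigl[-(1-w)s_1 - ws_2 + h \iprod{q}{\nabla w}\bigr]\,\mathrm{d}x$. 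The continuous constraint \eqref{eq:constraints_sublabel} reduces to the parametric family $(1-\alpha)s_1 + \alpha s_2 + \rho(\gamma_1 + \alpha h) \geq \eta^*(q)$ for all $\alpha \in [0,1]$, and Proposition~\ref{prop:kappa_constraints} collapses the jump constraint to $\|q\| \leq \gamma/h$. Since every constraint is pointwise in $x$, the sup over the dual variables can be performed pointwise.

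For the pointwise sup I would handle the two dual directions separately. First, identify $(s_1, s_2)$ with the affine function $g(\alpha) = (1-\alpha)s_1 + \alpha s_2$ on $[0,1]$, so the $(s_1,s_2)$-dependent part of the objective is $-g(w)$ and the constraint reads $g \geq \psi$ on $[0,1]$ with $\psi(\alpha) := \eta^*(q) - \rho(\gamma_1 + \alpha h)$. A standard supporting-hyperplane argument (every concave $h \geq \psi$ admits, at every $w$, a tangent affine $g$ above $h$) identifies the inf of $g(w)$ over affine $g$ dominating $\psi$ on $[0,1]$ with the concave envelope of $\psi$ on $[0,1]$ at $w$. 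Since the additive constant $\eta^*(q)$ passes through and the concave envelope of $-\rho$ on $\Gamma$ equals $-\rho^{**}$ on $\Gamma$ (tacitly extending $\rho$ by $+\infty$ outside $\Gamma$, so that $\rho^{**}$ is the biconjugate of $\rho + \delta_\Gamma$), one obtains $\sup_{(s_1,s_2)} [-g(w)] = \rho^{**}(x,u) - \eta^*(q)$ with $u := \gamma_1 + wh \in \Gamma$. Second, using $\nabla u = h\,\nabla w$ and $K := \{q : \|q\| \leq \gamma/h\}$, the remaining sup takes the form
\[
\sup_q\bigl\{\iprod{q}{\nabla u} - \eta^*(q) - \delta_K(q)\bigr\} = (\eta^* + \delta_K)^*(\nabla u).
\]
Fenchel duality for a sum, justified by $\mathrm{int}(K) \neq \emptyset$, yields $(\eta^* + \delta_K)^* = \eta^{**}\,\square\,\delta_K^*$, and the support function of $K$ is the scaled norm $\delta_K^*(\cdot) = (\gamma/h)\|\cdot\|$; tracking the $h$-factors hidden in the bilinear pairing then produces the advertised $(\eta^{**}\,\square\,\gamma\|\cdot\|)(\nabla u)$. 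Assembling both contributions pointwise, integrating over $\Omega$, and noting that the outer infimum over $w \in [0,1]$ is an infimum over $u:\Omega\to\Gamma$, gives \eqref{eq:unlifted_prob}.

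The pivotal step is the Legendre identity used in the sup over $(s_1, s_2)$: one must recognize the pointwise inf of affine functions dominating $\psi$ on the compact interval $[0,1]$ as the concave envelope of $\psi$ there, which via the extension $\rho + \delta_\Gamma$ identifies cleanly with the biconjugate $\rho^{**}$ on $\Gamma$. Beyond this, the argument is essentially bookkeeping of the scaling factors arising from the bilinear pairing, from the jump bound $\gamma/h$, and from the change of primal variable $u = \gamma_1 + wh$; the exchange of pointwise suprema with the integral over $\Omega$ is standard since all constraints separate over $x$.
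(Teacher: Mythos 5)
Your argument follows the paper's own route: specialize the discretized saddle point to $\ell=2$, perform the dual maximization pointwise in $x$, and recognize the two decoupled suprema as the biconjugate $\rho^{**}$ and the conjugate of a sum, $(\eta^{*}+\delta_K)^{*}=\eta^{**}\,\square\,\delta_K^{*}$. Your handling of the $(s_1,s_2)$-maximization via the least affine majorant of $\psi$ on $[0,1]$ is an equivalent, slightly more geometric packaging of the paper's step of rewriting the infimum constraint through $\rho_1^{*}$ evaluated at $(s_1-s_2)/h$ and then saturating the constraint in $\hat\varphi_t(1)$; both are the same biconjugation, and your caveat about extending $\rho$ by $+\infty$ outside $\Gamma$ is the right one.

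The one step that does not close as written is precisely the one you defer to ``tracking the $h$-factors.'' With the radius you state, $\norm{q}\leq\gamma/h$ (which is what Proposition~\ref{prop:kappa_constraints} gives for the coefficient $\hat\varphi_x$), and with the pairing $h\iprod{q}{\nabla w}=\iprod{q}{\nabla u}$ after the substitution $u=\gamma_1+hw$, the support function of the constraint ball is $(\gamma/h)\norm{\cdot}$, so your computation yields $(\eta^{**}\,\square\,(\gamma/h)\norm{\cdot})(\nabla u)$ rather than the advertised $(\eta^{**}\,\square\,\gamma\norm{\cdot})(\nabla u)$. The paper's appendix sidesteps this by writing the ball constraint directly as $\norm{\hat\varphi_x}\leq\gamma$ without the $1/h$ (implicitly using a rescaled dual coefficient, cf. the substitution $p=h\cdot\varphi_x$ invoked in the proof of Prop.~\ref{prop:equiv_sublabel}). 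Whichever normalization one adopts must be stated and carried through consistently: as your proof stands, the two ingredients you commit to (radius $\gamma/h$ and pairing $\iprod{q}{\nabla u}$) are incompatible with the claimed constant $\gamma$, so the assertion that the factors ``produce the advertised'' weight is not yet a proof of that step. Everything else --- the concave-envelope identification with $(\rho+\delta\{\cdot\in\Gamma\})^{**}$, the use of Fenchel duality for the sum with $K$ having nonempty interior, and the exchange of pointwise supremum and integral --- is correct and matches the paper.
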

From Proposition~\ref{prop:infconv} we see that the minimal discretization with $\ell=2$ amounts to approximating problem \eqref{eq:unlifted_cont_mshah_general} by globally convexifying the data term. Furthermore, we can see that Mumford-Shah (truncated quadratic) regularization ($\eta(g) = \alpha \norm{g}^2$, $\kappa(a) \equiv \lambda \iver{a > 0}$) is approximated by a convex Huber regularizer in case $\ell = 2$. This is because the infimal convolution between $x^2$ and $|x|$ corresponds to the Huber function. While even for $\ell = 2$ this is a reasonable approximation to the original model \eqref{eq:unlifted_cont_mshah_general}, we can gradually increase the number of labels to get an increasingly faithful approximation of the original nonconvex problem.

\subsubsection{Piecewise quadratic $\varphi_t$}
For piecewise quadratic $\varphi_t$ the main difficulty are the constraints 
in \eqref{eq:separable_constraints_infimum}. For piecewise linear $\varphi_t$
the infimum over a linear function plus $\rho_i$ lead to (minus) the convex
conjugate of $\rho_i$. Quadratic dual variables lead to so 
called generalized $\Phi$-conjugates \cite[Chapter~11L*,~Example~11.66]{VariAna}. 
Such conjugates were also theoretically considered in the recent work \cite{fix2014duality} for discrete-continuous MRFs, however an efficient implementation seems challenging. The advantage of this representation would be that one
can avoid convexification of the unaries on each interval $\Gamma_i$ and thus obtain a tighter approximation.
While in principle the resulting constraints could be implemented using techniques from convex algebraic
geometry  and semi-definite programming \cite{Blekherman} we leave this direction open
to future work.

\section{Implementation and extensions}
\figConvexExact
\figJointStereoSegm
\subsection{Piecewise quadratic unaries $\rho_i$}
\label{sec:piecw_quad}
In some applications such as robust fusion of depth maps, the data term $\rho$ has a
piecewise quadratic form:
\begin{equation}
  \rho(u) = \sum_{m=1}^M \min \left\{ \nu_m, \alpha_m \left( u - f_m
    \right)^2 \right \}.
  \label{eq:sum_robust_dt}
\end{equation}
The intervals on which the above function is a quadratic are formed by the breakpoints $f_m \pm \sqrt{\nu_m/\alpha_m}$.
In order to optimize this within our framework, we need to compute the
convex conjugate of $\rho$ on the intervals $\Gamma_i$, see
Eq.~\eqref{eq:piecw_lin_constraints}. 
We can write the data term \eqref{eq:sum_robust_dt} on each $\Gamma_i$ as
\begin{equation}
  \min_{1 \leq j \leq n_i} ~ \underbrace{a_{i,j} u^2 + b_{i,j} u + c_{i,j} + \delta \{ u \in I_{i,j} \}}_{=:\rho_{i,j}(u)},
\end{equation}
where $n_i$ denotes the number of pieces and the intervals $I_{i,j}$
are given by the breakpoints and $\Gamma_i$. The convex
conjugate is then given by
$\rho_i^*(v) = \max_{1 \leq j \leq n_i} ~ \rho_{i,j}^*(v)$.
As the epigraph of the maximum is the intersection of
the epigraphs, $\epi(\rho_i^*) = \bigcap_{j=1}^{n_j} ~ \epi \left( \rho_{i,j}^* \right)$,
the constraints for the data term
$  (r^i, a^i) \in \epi(\rho_i^*)$,
can be broken down: 
\begin{equation}
  \begin{aligned}
    &(r^{i,j}, a^{i,j}) \in \epi\left( \rho_{i,j}^* \right),
    r^i = r^{i,j}, a^i = a^{i,j}, \forall j.
  \end{aligned}
\end{equation}
The projection onto the epigraphs of the $\rho_{i,j}^*$ are carried out as
described in \cite{moellenhoff-laude-cvpr-2016}.
Such a convexified piecewise quadratic function is shown
on the right in Figure~\ref{fig:constant_vs_linear}.

\figDenoiseSynthetic
\figDenoiseSyntheticii
\subsection{The vectorial Mumford-Shah functional}
\label{sec:vecmshah}
Recently, the free-discontinuity problem \eqref{eq:unlifted_cont_mshah_general} has been 
generalized to vectorial functions $u : \Omega \to \bbR^{n_c}$ by Strekalovskiy \etal~\cite{Strekalovskiy-et-al-cvpr12}. The model they propose is
\begin{equation}
  \sum_{c=1}^{n_c} \int_{\Omega \setminus J_u} f_c(x, u_c(x), \nabla_x u_c(x)) \, \mathrm{d}x + \lambda \mathcal{H}^{n-1}(J_u),
\end{equation}
which consists of a separable data term and separable regularization on the continuous part. The individual channels are coupled through
the jump part regularizer $\mathcal{H}^{n-1}(J_u)$ of the joint jump set across all channels. 
Using the same strategy as in Section~\ref{sec:sublabel_disc}, applied to the relaxation described in \cite[Section~3]{Strekalovskiy-et-al-cvpr12}, a sublabel-accurate representation of the vectorial Mumford-Shah functional can be obtained.

\subsection{Numerical solution}
We solve the final finite dimensional optimization problem after finite-difference discretization in spatial direction using the primal-dual algorithm \cite{PCBC-ICCV09} implemented in the convex
optimization framework {\tt prost} \footnote{\url{https://github.com/tum-vision/prost}}. 

\section{Experiments}
\subsection{Exactness in the convex case}
We validate our discretization in Figure~\ref{fig:convex_exact} on the convex problem
$\rho(u) = (u - f)^2$, $\eta(\nabla u) = \lambda \normc{\nabla u}^2$.
The global minimizer of the problem is obtained by solving $(I - \lambda \Delta)u = f$. For piecewise linear $\varphi_t$ we recover the exact solution using only $2$ labels, and remain (experimentally) exact as we increase the number of labels. The discretization from \cite{PCBC-SIIMS} shows a strong label bias due to the piecewise constant dual variable $\varphi_t$. Even with $16$ labels their solution is different from the ground truth energy.

\subsection{The vectorial Mumford-Shah functional}
\paragraph{Joint depth fusion and segmentation}
We consider the problem of joint image segmentation and
robust depth fusion from~\cite{PZB07} using the vectorial Mumford-Shah functional from 
Section~\ref{sec:vecmshah}. The data term for the depth channel is given by
\eqref{eq:sum_robust_dt}, where $f_m$ are the input depth
hypotheses, $\alpha_m$ is a depth confidence and $\nu_m$ is a
truncation parameter to be robust towards outliers. For the
segmentation, we use a quadratic difference dataterm in RGB space.
For Figure~\ref{fig:joint_stereo_segm} we computed
multiple depth hypotheses $f_m$ on a stereo pair using different matching costs (sum of absolute (gradient) differences, and normalized cross correlation)
with varying patch radii ($0$ to $2$). Even for a moderate label
space of $5 \times 5 \times 5 \times 5$ we have no label discretization artifacts.

The piecewise linear approximation of the unaries in \cite{Strekalovskiy-et-al-cvpr12} leads to an  
almost piecewise constant segmentation of the image. To highlight the sublabel-accuracy of the proposed 
approach we chose a small smoothness parameter which leads to a piecewise smooth segmentation, but 
with a higher smoothness term or different choice of unaries a piecewise constant segmentation could also 
be obtained.

\paragraph{Piecewise-smooth approximations}
In Figure~\ref{fig:denoise_synthetic} we compare the discretizations for
the vectorial Mumford-Shah functional. We see that the
approach \cite{Strekalovskiy-et-al-cvpr12} shows strong label bias (see also Figure~\ref{fig:denoise_syntheticii}~and~\ref{fig:teaser}) while the discretiziation with piecewise linear duals leads to a sublabel-accurate result.

\section{Conclusion}
We proposed a framework to numerically solve \emph{fully-continuous} convex relaxations
in a sublabel-accurate fashion.  The key idea is to implement the 
dual variables using a piecewise linear approximation. 
We prove that different choices of approximations
for the dual variables give rise to various existing relaxations: in
particular piecewise constant duals lead to the traditional
lifting \cite{PCBC-ICCV09} (with min-pooling of the unary costs),
whereas piecewise linear duals lead to the sublabel lifting
that was recently proposed for total variation regularized problems
\cite{moellenhoff-laude-cvpr-2016}.  While the latter method is not
easily generalized to other regularizers due to the separate convexification
of data term and regularizer, the proposed representation generalizes
to arbitrary convex and non-convex regularizers such as the scalar and the
vectorial Mumford-Shah problem. 
The proposed approach provides a systematic technique to derive
sublabel-accurate discretizations for continuous convex relaxation
approaches, thereby boosting their memory and runtime efficiency for
challenging large-scale applications.

\section*{Appendix}
\setcounter{prop}{0}
\begin{prop}
  For concave $\kappa : \bbR^+_0 \to \bbR$ with $\kappa(a)=0 \Leftrightarrow a = 0$, the constraints 
  \begin{equation}
    \begin{aligned}
      &\bigl \| (1-\alpha) \hat \varphi_x(i) + \sum_{l=i+1}^{j-1} \hat \varphi_x(l) + \beta \hat \varphi_x(j) \bigr \| \\
      &\quad \leq \frac{\kappa( \gamma_j^\beta
        - \gamma_i^\alpha)}{h}, ~ \forall 1 \leq i \leq j \leq k, \alpha, \beta \in [0,1],
    \end{aligned}
    \label{app_eq:infinite_jump_constraints}
  \end{equation}
  are equivalent to
  \begin{equation}
    \bigl \| \sum_{l=i}^j \hat \varphi_x(l) \bigr \| \leq \frac{\kappa(\gamma_{j+1} - \gamma_i)}{h}, \forall 1 \leq i \leq j \leq k.
    \label{app_eq:kappa_constraints}
  \end{equation}
\end{prop}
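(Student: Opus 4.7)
The plan is to prove the two inclusions separately, with the forward direction being immediate and the reverse direction following from a bilinear convex-combination argument combined with the concavity of $\kappa$.

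\emph{Forward direction} (\ref{app_eq:infinite_jump_constraints}) $\Rightarrow$ (\ref{app_eq:kappa_constraints}). Simply specialize (\ref{app_eq:infinite_jump_constraints}) to $\alpha = 0$ and $\beta = 1$: the left-hand side collapses to $\sum_{l=i}^{j} \hat\varphi_x(l)$ and the right-hand side becomes $\kappa(\gamma_{j+1} - \gamma_i)/h$, recovering (\ref{app_eq:kappa_constraints}) for every $1 \le i \le j \le k$.

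\emph{Reverse direction} (\ref{app_eq:kappa_constraints}) $\Rightarrow$ (\ref{app_eq:infinite_jump_constraints}), case $i < j$. Write $P(i',j') := \sum_{l=i'}^{j'} \hat\varphi_x(l)$ and observe the algebraic identity
\begin{equation*}
(1-\alpha)\hat\varphi_x(i) + \!\!\!\sum_{l=i+1}^{j-1}\!\!\! \hat\varphi_x(l) + \beta \hat\varphi_x(j)
= w_1 P(i,j{-}1) + w_2 P(i,j) + w_3 P(i{+}1,j{-}1) + w_4 P(i{+}1,j),
\end{equation*}
with the convex weights $w_1 = (1-\alpha)(1-\beta)$, $w_2 = (1-\alpha)\beta$, $w_3 = \alpha(1-\beta)$, $w_4 = \alpha\beta$ (verification reduces to matching the coefficients of $\hat\varphi_x(i)$, $\hat\varphi_x(j)$ and the inner $\hat\varphi_x(l)$'s). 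Applying the triangle inequality and then the hypothesis (\ref{app_eq:kappa_constraints}) to each of the four norms, I get the bound
\begin{equation*}
h \cdot \bigl\| \text{LHS} \bigr\| \leq w_1 \kappa(\gamma_j - \gamma_i) + w_2 \kappa(\gamma_{j+1} - \gamma_i) + w_3 \kappa(\gamma_j - \gamma_{i+1}) + w_4 \kappa(\gamma_{j+1} - \gamma_{i+1}).
\end{equation*}
The arguments of the four $\kappa$-terms are all nonnegative precisely because $i < j$ (so $\gamma_j - \gamma_{i+1} \ge 0$), and a direct computation shows their weighted average equals $(1-\beta)\gamma_j + \beta\gamma_{j+1} - (1-\alpha)\gamma_i - \alpha\gamma_{i+1} = \gamma_j^\beta - \gamma_i^\alpha$. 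Since $\kappa$ is concave on $\mathbb{R}_{\ge 0}$, Jensen's inequality gives that the right-hand side above is at most $\kappa(\gamma_j^\beta - \gamma_i^\alpha)$, yielding (\ref{app_eq:infinite_jump_constraints}).

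\emph{Reverse direction, case $i = j$.} Here the left-hand side of (\ref{app_eq:infinite_jump_constraints}) reduces to a multiple of $\hat\varphi_x(i)$, and after normalization the required bound is $\tau \|\hat\varphi_x(i)\| \le \kappa(\tau h)/h$ for $\tau := \beta - \alpha \in [0,1]$. From (\ref{app_eq:kappa_constraints}) with $i = j$, $\|\hat\varphi_x(i)\| \le \kappa(h)/h$. The concavity of $\kappa$ together with $\kappa(0) = 0$ implies $\kappa(\tau h) = \kappa(\tau h + (1-\tau)\cdot 0) \ge \tau \kappa(h)$, closing this case. The main obstacle is really the bookkeeping of the four-term convex combination; once that identity is in place the concavity step is standard, and the $i=j$ case is a short auxiliary argument.
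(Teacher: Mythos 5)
Your proof is correct and takes essentially the same route as the paper's: both directions rest on expressing the left-hand side as a convex combination of the partial sums $\sum_{l=i'}^{j'}\hat\varphi_x(l)$ appearing in \eqref{app_eq:kappa_constraints}, then applying the triangle inequality followed by concavity of $\kappa$ (Jensen). The only differences are cosmetic: the paper interpolates sequentially (first in $\alpha$ for $\beta\in\{0,1\}$, then in $\beta$), which composes to exactly your single four-term combination with weights $w_1,\dots,w_4$, and you additionally spell out the degenerate case $i=j$, which the paper leaves implicit.
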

\begin{proof}
The implication \eqref{app_eq:infinite_jump_constraints} $\Rightarrow$ \eqref{app_eq:kappa_constraints} clearly holds. Let us now assume the constraints \eqref{app_eq:kappa_constraints} are fulfilled.
First we show that the constraints \eqref{app_eq:infinite_jump_constraints} also hold for $\alpha \in [0,1]$ and $\beta \in \{0, 1\}$. First, we start with $\beta = 0$:
\begin{equation}
  \begin{aligned}
    &\norm{(1-\alpha)\hat \varphi_x(i) + \sum_{l=i+1}^{j-1} \hat \varphi_x(l)} = \\
    &\norm{(1 - \alpha) \sum_{l=i}^{j-1} \hat \varphi_x(l) + \alpha \sum_{l=i+1}^{j-1} \hat \varphi_x(l)} \leq\\
    &(1-\alpha) \norm{\sum_{l=i}^{j-1} \hat \varphi_x(l)} + \alpha \norm{\sum_{l=i+1}^{j-1} \hat \varphi_x(l)} \overset{\text{by}~\eqref{app_eq:kappa_constraints}}\leq\\
    & (1-\alpha) \frac{1}{h}\kappa(\gamma_j - \gamma_i) + \alpha \frac{1}{h} \kappa(\gamma_j - \gamma_{i+1})\overset{\text{concavity}}\leq\\
    &\frac{1}{h}\left( \kappa((1-\alpha) (\gamma_j - \gamma_i) + \alpha (\gamma_j - \gamma_{i+1}) \right) = 
    \frac{1}{h} \kappa(\gamma_j^0 - \gamma_i^{\alpha}).
  \end{aligned}
  \label{app_eq:beta0}
\end{equation}
In the same way, it can be
shown that for $\beta = 1$ we have:
\begin{equation}
  \begin{aligned}
    &\norm{(1-\alpha)\hat \varphi_x(i) + \sum_{l=i+1}^{j-1} \hat \varphi_x(l) + 1 \cdot \hat \varphi_x(j)} \leq 
    \frac{1}{h} \kappa(\gamma_j^1 - \gamma_i^{\alpha}).
  \end{aligned}
  \label{app_eq:beta1}
\end{equation}
We have shown the constraints to hold for $\alpha \in [0,1]$ and $\beta \in \{0, 1\}$. Finally we show they also hold for $\beta \in [0,1]$:
\begin{equation}
  \begin{aligned}
    &\norm{(1 - \alpha) \hat \varphi_x(i) + \sum_{l=i+1}^{j-1} \hat \varphi_x(l) + \beta \hat \varphi_x(j)} = \\
    &\norm{(1 - \alpha) \hat \varphi_x(i) + (1 - \beta) \sum_{l=i+1}^{j-1} \hat \varphi_x(l) + \beta \sum_{l=i+1}^{j} \hat \varphi_x(l)} =\\
    & \norm{(1 - \beta) \left( (1 - \alpha) \hat \varphi_x(i) + \sum_{l=i+1}^{j-1} \hat \varphi_x(l) \right) + \\
      &\beta \left( (1 - \alpha) \hat \varphi_x(i) + \sum_{l=i+1}^{j} \hat \varphi_x(l) \right)} \leq ... 
  \end{aligned}
\end{equation}
\begin{equation}
  \begin{aligned}
    ... \leq &(1-\beta) \norm{(1 - \alpha) \hat \varphi_x(i) + \sum_{l=i+1}^{j-1} \hat \varphi_x(l)} + \\
    &\beta \norm{(1 - \alpha) \hat \varphi_x(i) + \sum_{l=i+1}^{j} \hat \varphi_x(l)} \overset{\eqref{app_eq:beta0},\eqref{app_eq:beta1}} \leq\\
    & \frac{1}{h} (1-\beta) \kappa(\gamma_j^0 - \gamma_i^{\alpha})  + \beta \kappa(\gamma_j^1 - \gamma_i^{\alpha})\overset{\text{concavity}}\leq\\
    & \frac{1}{h} \kappa((1-\beta) (\gamma_j^0 - \gamma_i^{\alpha}) + \beta (\gamma_j^1 - \gamma_i^{\alpha}))=
     \frac{1}{h} \kappa(\gamma_j^{\beta} - \gamma_i^{\alpha})
  \end{aligned}
\end{equation}
Noticing that \eqref{app_eq:kappa_constraints} is precisely \eqref{app_eq:infinite_jump_constraints} for $\alpha, \beta
\in \{ 0, 1 \}$ (as $\kappa(a)=0 \Leftrightarrow a = 0$) completes the proof.
\end{proof}

\begin{prop}
For convex one-homogeneous $\eta$ the discretization with piecewise constant $\varphi_t$ and $\varphi_x$ leads 
to the traditional discretization as proposed in \cite{PCBC-ICCV09}, except with min-pooled instead of sampled unaries. 
\label{prop:equiv_standard}
\end{prop}
\begin{proof}
  The constraints in \cite[Eq.~18]{PCBC-ICCV09} have the form
  \begin{align}
    &\hat \varphi_t(i) \geq \eta^*(\hat \varphi_x(i)) - \rho(\gamma_i), \label{app_eq:constraints_PCBC1} \\
    &\bigl \| \sum_{l=i}^j \hat \varphi_x(l) \bigr \| \leq \kappa(\gamma_{j+1} - \gamma_i), \label{app_eq:constraints_PCBC2}
  \end{align}
with $\rho(u) = \lambda (u-f)^2$, $\eta(g) = \norm{g}^2$ and $\kappa(a) = \nu \iver{a > 0}$. 
The constraints \eqref{app_eq:constraints_PCBC2} are equivalent to \eqref{app_eq:kappa_constraints} up to a rescaling of $\hat \varphi_x$ with $h$. 
For the constraints \eqref{app_eq:constraints_PCBC1} (cf. \cite[Eq.~18]{PCBC-ICCV09}), the unaries are sampled at the labels $\gamma_i$. The discretization with piecewise constant duals leads to a similar form, except for a min-pooling on dual intervals, $\forall 1 \leq i \leq k$:
\begin{equation}
  \begin{aligned}
    \hat \varphi_t(i) &\geq 
  \eta^*(\hat \varphi_x(i)) - \inf_{t \in [\gamma_i, \gamma_i^*]} \rho(t), \\ 
    \hat \varphi_t(i + 1) &\geq 
  \eta^*(\hat \varphi_x(i)) - \inf_{t \in [\gamma_i^*, \gamma_{i+1}]} \rho(t).
   \end{aligned}
  \label{app_eq:discretized_separable_constraints_00}
\end{equation}
The similarity between \eqref{app_eq:discretized_separable_constraints_00} and \eqref{app_eq:constraints_PCBC1} becomes more evident by assuming convex one-homogeneous $\eta$. Then \eqref{app_eq:discretized_separable_constraints_00} reduces to the following:
\begin{equation}
  \begin{aligned}
    -\hat \varphi_t(1) \leq &\inf_{t \in [\gamma_1, \gamma_1^*]} \rho(t), \\
    -\hat \varphi_t(i) \leq &\inf_{t \in \Gamma_i^*} \rho(t), ~ \forall i \in \{ 2, \hdots, \ell - 1 \}, \\
    -\hat \varphi_t(\ell) \leq &\inf_{t \in [\gamma_{\ell-1}^*, \gamma_\ell]} \rho(t),
  \end{aligned}
\end{equation}
as well as 
\begin{equation}
  \hat \varphi_x(i) \in \dom ( \eta^* ), \forall i \in \{ 1, \hdots, k \}.
\end{equation}
 
\end{proof}

\begin{prop}

\label{prop:constraints}

The constraints 
\begin{equation}
  \begin{aligned}
    \inf_{t \in \Gamma_i} ~&\hat \varphi_t(i) \frac{\gamma_{i+1} -
      t}{h} + \hat \varphi_t(i+1)
    \frac{t - \gamma_{i}}{h} \\
    &+ \rho(t) \geq \eta^*(\hat \varphi_x(i)).
  \end{aligned}
  \label{app_eq:constraints_sublabel}
\end{equation}
can be equivalently reformulated by 
introducing additional variables $a \in \bbR^k$, $b \in \bbR^k$, where $\forall i \in \{ 1,  \hdots, k \}$:
\begin{equation}
  \begin{aligned}
 &r(i) = (\hat \varphi_t(i) - \hat \varphi_t(i+1)) / h,\\
 &a(i) + b(i) - (\hat \varphi_t(i) \gamma_{i+1} - \hat \varphi_t(x, i+1)
   \gamma_{i}) / h = 0,\\ 
   &r(i) \geq \rho_i^* \left( a(i) \right), \hat \varphi_x(i) \geq \eta^* \left( b(i) \right),
  \end{aligned}
  \label{app_eq:piecw_lin_constraints}
\end{equation}
with $\rho_i(x, t) = \rho(x, t) + \delta\{ t \in \Gamma_i \}$.
\end{prop}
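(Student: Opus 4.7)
\medskip
\noindent\textbf{Proof plan.} My approach is to directly manipulate the left-hand side of \eqref{app_eq:constraints_sublabel} into a form where the infimum over $t$ collapses into a convex conjugate evaluation, then introduce two slack variables to split the resulting scalar inequality into the two epigraphical conditions appearing in \eqref{app_eq:piecw_lin_constraints}.

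First I would expand the affine-in-$t$ part of the integrand and regroup by powers of $t$. Writing
\begin{equation*}
\hat\varphi_t(i)\,\tfrac{\gamma_{i+1}-t}{h}+\hat\varphi_t(i+1)\,\tfrac{t-\gamma_i}{h}
= \tfrac{\hat\varphi_t(i)\gamma_{i+1}-\hat\varphi_t(i+1)\gamma_i}{h} \;-\; r(i)\,t,
\end{equation*}
with $r(i)=(\hat\varphi_t(i)-\hat\varphi_t(i+1))/h$, moves the only $t$-dependent contribution (besides $\rho$) to the $-r(i)\,t$ term. Denote by $C(i)$ the $t$-independent constant $(\hat\varphi_t(i)\gamma_{i+1}-\hat\varphi_t(i+1)\gamma_i)/h$. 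Then \eqref{app_eq:constraints_sublabel} becomes
\begin{equation*}
C(i) \;+\; \inf_{t\in\Gamma_i}\bigl(\rho(t) - r(i)\,t\bigr) \;\geq\; \eta^*(\hat\varphi_x(i)).
\end{equation*}

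Next I would recognise the infimum as a convex conjugate. Using the definition $\rho_i(t)=\rho(t)+\delta\{t\in\Gamma_i\}$, the infimum over the interval equals $-\rho_i^*(r(i))$, so the constraint is equivalent to
\begin{equation*}
C(i) \;\geq\; \rho_i^*(r(i)) + \eta^*(\hat\varphi_x(i)).
\end{equation*}
This is a single scalar inequality whose right-hand side is a sum of two conjugate values. The standard trick is to introduce slack variables $a(i),b(i)\in\bbR$ with $a(i)+b(i)=C(i)$ and require $a(i)\geq \rho_i^*(r(i))$ and $b(i)\geq \eta^*(\hat\varphi_x(i))$ separately; these are exactly the epigraph conditions $(r(i),a(i))\in\epi(\rho_i^*)$ and $(\hat\varphi_x(i),b(i))\in\epi(\eta^*)$ stated in the main text and, in shorthand, in \eqref{app_eq:piecw_lin_constraints}.

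Finally I would verify both directions of the equivalence. The forward direction is immediate: given any feasible $\hat\varphi_t,\hat\varphi_x$ satisfying \eqref{app_eq:constraints_sublabel}, set $a(i):=\rho_i^*(r(i))$ and $b(i):=C(i)-a(i)$; the scalar inequality then gives $b(i)\geq\eta^*(\hat\varphi_x(i))$. The reverse direction is also immediate: adding the two epigraph inequalities and using the equality constraint yields $C(i)\geq \rho_i^*(r(i))+\eta^*(\hat\varphi_x(i))$, which by the computation above is \eqref{app_eq:constraints_sublabel}. There is no real obstacle here — the only subtle step is the first one, namely correctly identifying $r(i)$ as the coefficient of $-t$ so that the restricted infimum becomes precisely $-\rho_i^*(r(i))$; everything afterwards is the standard epigraphical splitting of a sum of conjugates.
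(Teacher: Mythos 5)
Your proposal is correct and follows essentially the same route as the paper's own proof: rewrite the affine-in-$t$ part so the restricted infimum becomes $-\rho_i^*(r(i))$, then split the resulting scalar inequality $C(i)\geq\rho_i^*(r(i))+\eta^*(\hat\varphi_x(i))$ via slack variables $a(i)+b(i)=C(i)$ and check both implications. You also correctly read the epigraph conditions in \eqref{app_eq:piecw_lin_constraints} as $a(i)\geq\rho_i^*(r(i))$ and $b(i)\geq\eta^*(\hat\varphi_x(i))$, which is the intended meaning despite the swapped-looking notation in the statement.
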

\begin{proof}
  Rewriting the infimum in \eqref{app_eq:constraints_sublabel} as minus the convex conjugate of $\rho_i$, and multiplying the
  inequality with $-1$ the constraints become:
\begin{equation}
  \begin{aligned}
    &\rho_i^*(r(i)) + \eta^*(\hat \varphi_x(i)) - \frac{\hat \varphi_t(i) \gamma_{i+1} - \hat \varphi_t(i+1) \gamma_i}{h} \leq 0, \\
    &r(i) = (\hat \varphi(i) - \hat \varphi(i+1)) / h.
  \end{aligned}
  \label{app_eq:constraints_sublabell_2}
\end{equation}
To show that \eqref{app_eq:constraints_sublabell_2} and \eqref{app_eq:piecw_lin_constraints} are equivalent, we prove 
that they imply each other. 
Assume \eqref{app_eq:constraints_sublabell_2} holds. Then without loss of generality
set $a(i) = \rho_i^*(r(i)) + \xi_1$, $b(i) = \eta_i^*(\varphi_x(i)) + \xi_2$ for some $\xi_1,\xi_2 \geq 0$. Clearly, this choice fulfills $\eqref{app_eq:constraints_sublabell_2}$. Since for $\xi_1 = \xi_2 = 0$ we have by assumption that
\begin{equation}
 a(i) + b(i) - (\hat \varphi_t(i) \gamma_{i+1} - \hat \varphi_t(x, i+1)
   \gamma_{i}) / h \leq 0,\\ 
\end{equation}
there exists some $\xi_1, \xi_2 \geq 0$ such that  \eqref{app_eq:piecw_lin_constraints} holds.

Now conversely assume \eqref{app_eq:piecw_lin_constraints} holds. Since $a(i) \geq \rho_i^* \left( r(i) \right)$, $b(i) \geq \eta^* \left( \hat  \varphi_x(i) \right)$, 
and 
\begin{equation}
 a(i) + b(i) - (\hat \varphi_t(i) \gamma_{i+1} - \hat \varphi_t(x, i+1)
   \gamma_{i}) / h = 0,
\end{equation} this directly implies 
\begin{equation}
\rho_i^*(r(i)) + \eta^*(\hat \varphi_x(i)) - \frac{\hat \varphi_t(i) \gamma_{i+1} - \hat \varphi_t(i+1) \gamma_i}{h} \leq 0,
\end{equation}
since the left-hand side becomes smaller by plugging in the lower bound.
\end{proof}

\begin{prop}
\label{prop:cvpr_equiv}
The discretization with piecewise linear $\varphi_t$ and piecewise constant $\varphi_x$ together  with the 
choice $\eta(g) = \norm{g}$ and $\kappa(a) = a$ is equivalent to the relaxation \cite{moellenhoff-laude-cvpr-2016}.
%
\label{prop:equiv_sublabel}
\end{prop}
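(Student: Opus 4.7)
The plan is to specialize the general discrete constraints derived in Section~4 and Propositions~1 and~3 to the particular choices $\eta(g)=\|g\|$ and $\kappa(a)=a$, and then to match the resulting finite-dimensional saddle-point problem term by term against the relaxation of \cite{moellenhoff-laude-cvpr-2016}. Since the primal representation $\widehat v$ was already shown to agree with the sublabel-accurate encoding of \cite{moellenhoff-laude-cvpr-2016} (cf.\ \eqref{eq:sublabel_inter}), and since the bilinear form \eqref{eq:div_phit}--\eqref{eq:div_phix} is purely a discrete divergence, the entire argument reduces to showing that the dual feasible set coincides with the one used in the CVPR 2016 formulation.

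First I would handle the jump constraints. Plugging $\kappa(a)=a$ into the conclusion of Proposition~1 gives $\|\sum_{l=i}^{j}\hat\varphi_x(l)\|\leq (\gamma_{j+1}-\gamma_i)/h = j-i+1$ by uniform label spacing. After the trivial rescaling $\hat\varphi_x\leftarrow h\hat\varphi_x$, these are exactly the telescoping norm constraints on the spatial dual variables in \cite{moellenhoff-laude-cvpr-2016}. Second, for the continuous-part constraints I would use Proposition~3: because $\eta(g)=\|g\|$ is one-homogeneous, $\eta^*=\delta\{\|\cdot\|\leq 1\}$, so the epigraph condition $(\hat\varphi_x(i),b(i))\in\epi(\eta^*)$ collapses to $\|\hat\varphi_x(i)\|\leq 1$ together with $b(i)\geq 0$. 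Substituting this into the coupling identity $a(i)+b(i)=(\hat\varphi_t(i)\gamma_{i+1}-\hat\varphi_t(i+1)\gamma_i)/h$ and the epigraph condition $(r(i),a(i))\in\epi(\rho_i^*)$ decouples the data term from the regularizer: the regularizer part is the unit-ball constraint on each $\hat\varphi_x(i)$, and the data-term part is exactly the epigraphical constraint on $\rho_i^*$ on the interval $\Gamma_i$ that defines the sublabel-accurate unaries of \cite{moellenhoff-laude-cvpr-2016}.

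Having matched both families of constraints, I would conclude by checking that the discrete bilinear energy assembled from \eqref{eq:div_phit} and \eqref{eq:div_phix} coincides with the one in \cite{moellenhoff-laude-cvpr-2016} under the variable identification $\widehat v\leftrightarrow y/h+I^\top x$ already exhibited in Proposition~4 (the proposition on Zach--Kohli equivalence). This identification converts the subgraph coefficients $\widehat v$ into the $(x,y)$ pair used there, and under it the pairing $\sum_i \widehat v(i)[\hat\varphi_t(i+1)-\hat\varphi_t(i)]$ becomes the standard CVPR 2016 primal-dual coupling.

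The main obstacle is the change-of-variables bookkeeping between the two encodings: the CVPR 2016 paper parametrizes both its primal (via the interval indicator $x$ and offset $y$) and its dual in a form that differs superficially from ours, so the routine but somewhat tedious part of the proof is verifying that the epigraphical dual constraints from Proposition~3 are pointwise identical to theirs after this reparametrization. Once the variable correspondence is written down cleanly, the equivalence of constraints and of the bilinear energy follows, and the two saddle-point problems agree.
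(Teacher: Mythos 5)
Your proposal follows essentially the same route as the paper: exploit the one-homogeneity of $\eta(g)=\norm{g}$ to decouple the constraint \eqref{eq:constraints_sublabel} into a unit-ball condition on $\hat\varphi_x$ plus an epigraphical condition on $\rho_i^*$, and specialize Proposition~\ref{prop:kappa_constraints} with $\kappa(a)=a$ to recover the telescoping norm constraints of the total-variation relaxation. The one misstep is your final paragraph: the identification $\widehat v = y/h + I^\top x$ belongs to the Zach--Kohli discrete-continuous parametrization, not to \cite{moellenhoff-laude-cvpr-2016}, whose primal variable is the subgraph coefficient vector $\widehat v$ itself (cf.\ \eqref{eq:sublabel_inter}); the paper instead matches the dual side directly via the substitution $\vl_i = \hat\varphi_t(i)-\hat\varphi_t(i+1)$, $q=\hat\varphi_t(1)$, turning the data-term constraints into the telescope-sum form $q - \sum_{j<i}\vl_j + \tfrac{\gamma_i}{h}\vl_i \geq \rho_i^*(\vl_i/h)$ and the energy into $\iprod{\widehat v}{\vl}-q$. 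This is a bookkeeping correction rather than a gap, and the rest of your argument is sound.
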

\begin{proof}
  Since $\eta(g) = \norm{g}$, the constraints \eqref{app_eq:constraints_sublabel} become
  \begin{equation}
    \begin{aligned}
      &\inf_{t \in \Gamma_i} ~\hat \varphi_t(i) \frac{\gamma_{i+1} -
        t}{h} + \hat \varphi_t(i+1)
      \frac{t - \gamma_{i}}{h} 
      + \rho(t) \geq 0.\\
      &\varphi_x \in \dom(\eta^*).
    \end{aligned}
    \label{app_eq:constraints_sublabel_2}
  \end{equation}
  This decouples the constraints into data term and regularizer. The data term constraints can be written using the convex conjugate of $\rho_i = \rho + \delta\{ \cdot \in \Gamma_i \}$ as the following:
  \begin{equation}
    \begin{aligned}
      \frac{\hat \varphi_t(i) \gamma_{i+1} - \hat \varphi_t(i+1) \gamma_i}{h} - \rho_i^* \left( \frac{\hat \varphi_t(i) - \hat \varphi_t(i+1)}{h}  \right) \geq 0.
    \end{aligned}
    \label{app_eq:constraints_sublabel_3}
  \end{equation}
  Let $\vl_i = \hat \varphi_t(i) - \hat \varphi_t(i+1)$ and $q = \hat \varphi_t(1)$. Then we can write \eqref{app_eq:constraints_sublabel_3} as
  a telescope sum over the $\vl_i$
  \begin{equation}
    \begin{aligned}
      &q - \sum_{j=1}^{i-1} \vl_j + \frac{\gamma_i}{h} \vl_i  - \rho_i^* \left( \frac{\vl_i}{h}  \right) \geq 0, \\
    \end{aligned}
    \label{app_eq:constraints_sublabel_4}
  \end{equation}
  which is the same as the constraints in \cite[Eq.~9,Eq.~22]{moellenhoff-laude-cvpr-2016}. The cost function is given as
  \begin{equation}
    \begin{aligned}
      -\hat \varphi_t(1) - \sum_{i=1}^k \hat v(i) \left[ \hat \varphi_t(i+1) - \hat \varphi_t(i) \right] = \iprod{\hat v}{\vl} - q,
    \end{aligned}
    \label{app_eq:cost_fun}
  \end{equation}
  which is exactly the first part of \cite[Eq.~21]{moellenhoff-laude-cvpr-2016}. 
  Finally, for the  regularizer we get
  \begin{equation}
    \begin{aligned}
      \bigl \| \sum_{l=i}^j \hat \varphi_x(l) \bigr \| \leq \frac{|\gamma_{j+1} - \gamma_i|}{h}, 
      ~ \norm{\hat \varphi_x(i)} \leq 1,
    \end{aligned}
  \end{equation}
  which clearly reduces to the same set as in \cite[Proposition~5]{moellenhoff-laude-cvpr-2016}, by applying that proposition (and with the rescaling/substitution $p = h \cdot \varphi_x$).
 
\end{proof}

\begin{prop}
  The data term from \cite{moellenhoff-laude-cvpr-2016} (which is in turn a special case of the discretization with piecewise linear $\varphi_t$) can be (pointwise) brought into the primal form
  \begin{equation}
    \mathcal{D}(\widehat v) = \inf_{\substack{x_i \geq 0,\sum_i x_i=1\\\widehat v = y / h + I^\top x}} ~ \sum_{i=1}^k x_i \rho_i^{**} \left(\frac{y_i}{x_i} \right), 
    \label{app_eq:dataterm_zach}
  \end{equation}
  where $I \in \bbR^{k \times k}$ is a discretized integration operator. 
  \label{prop:zach_equiv}
\end{prop}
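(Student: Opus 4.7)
The plan is to pointwise dualise the data term of \cite{moellenhoff-laude-cvpr-2016} back into primal form via Lagrangian duality, combined with the classical perspective-function identity $x\,\rho^{**}(a/x) = \sup_w ~ aw - x\rho^*(w)$ valid for $x > 0$. My starting point is the explicit dual representation of $\mathcal{D}(\widehat v)$ already extracted when proving Proposition~\ref{prop:equiv_sublabel}: after the telescoping change of variables used there, $\mathcal{D}(\widehat v)$ equals
\begin{equation}
\sup_{v,\, q} ~ \iprod{\widehat v}{v} - q \quad \text{s.t.} \quad q \geq (Iv)_i - \tfrac{\gamma_i}{h}v_i + \rho_i^*(v_i/h), ~\forall i,
\end{equation}
where $I \in \bbR^{k\times k}$ is the (strictly lower triangular) discrete primitive.

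First I would minimise explicitly over $q$, which converts the $k$ inequalities into the single pointwise bound $q \geq \max_i [\,\cdot\,]$, and represent that maximum as $\sup_{x \in \Delta} \iprod{x}{\cdot}$ over the unit simplex $\Delta = \{x \geq 0 : \sum_i x_i = 1\}$; this is where the simplex multipliers $x_i$ of the target formula enter. Next I would swap the resulting $\inf_{x \in \Delta}$ and $\sup_v$ via Sion's minimax theorem, which applies since the objective is concave in $v$ and linear in $x$ with $\Delta$ compact (and $v$ can be confined to a bounded set thanks to coercivity of $\rho_i^*$ on the compact interval $\Gamma_i$). The inner supremum then decouples coordinatewise, leaving for each $i$ a problem of the form
\begin{equation}
\sup_{v_i} ~ c_i v_i - x_i \rho_i^*(v_i/h), \qquad c_i := \widehat v_i - (I^\top x)_i + \tfrac{\gamma_i x_i}{h}.
\end{equation}
The substitution $w_i = v_i/h$ together with the perspective identity then produces $x_i \rho_i^{**}(hc_i/x_i)$ when $x_i > 0$, with the standard $0\cdot\infty$ convention handling $x_i = 0$ (forcing the effective linear coefficient to vanish).

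The final step is a linear change of variables: setting $y_i := h\widehat v_i - h(I^\top x)_i$ both realises the stated affine relation $\widehat v = y/h + I^\top x$ as an implicit equality constraint and recasts $hc_i/x_i$ as the argument $y_i/x_i$ claimed in the proposition, with the $\gamma_i$-dependent shift absorbed into the discretised integration operator $I$. The main obstacle I anticipate is precisely this last bookkeeping step: pinning down the definition of $I$ so that the $\gamma_i$ contributions reconcile cleanly with $I^\top x$, and verifying that the perspective-function convention on $x_i \rho_i^{**}(y_i/x_i)$ remains consistent at $x_i = 0$ (where one must read it as $+\infty$ unless $y_i = 0$). A secondary technical point is the minimax swap itself, but this is routine given Sion's theorem and the compactness of $\Delta$.
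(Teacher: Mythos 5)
Your proposal is correct and follows essentially the same route as the paper's proof: both start from the dual (epigraph-constrained) form of the data term established in the proof of Proposition~\ref{prop:equiv_sublabel}, introduce simplex multipliers $x$ by dualizing the coupling with $q$, and recover the primal form via the perspective-function conjugate $x_i\rho_i^{**}(\cdot/x_i)$ with the substitution $y = h(\widehat v - I^\top x)$. The only differences are organizational (you minimize over $q$ first and invoke Sion's theorem explicitly, and you keep the $-\gamma_i/h$ diagonal separate from a strictly lower-triangular $I$ before absorbing it, exactly as the paper's definition of $I$ does), so the two arguments coincide in substance.
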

\begin{proof}
The equivalence of the sublabel accurate data term proposed in \cite{moellenhoff-laude-cvpr-2016} to 
the discretization with piecewise linear $\varphi_t$ is established in Proposition~\ref{prop:cvpr_equiv} (cf. \eqref{app_eq:constraints_sublabel_4} and \eqref{app_eq:cost_fun}). It 
is given pointwise as
\begin{equation}
  \begin{aligned}
    \mathcal{D}(\widehat v) &= \max_{\vl, q} ~ \iprod{\vl}{\widehat v} - q -\\
    &\sum_{i=1}^k \delta \left \{ \left( \frac{\vl_i}{h}, \left[ q \mathbf{1}_k - I \vl \right]_i \right) \in \epi(\rho_i^*) \right \},
  \end{aligned}
  \label{app_eq:le_dataterm}
\end{equation}
where $\widehat v \in \bbR^k, \vl \in \bbR^k, q \in \bbR$, and $k$ is the number 
of pieces and $\mathbf{1}_k \in \bbR^k$ is the vector consisting only of ones. Furthermore, $\rho_i(t) = \rho(t) + \delta \{ t \in \Gamma_i \}, \dom(\rho_i) = \Gamma_i = [\gamma_i, \gamma_{i+1}]$.
The integration operator $I \in \bbR^{k \times k}$ is defined as
\begin{equation}
  I = \begin{bmatrix}
    -\frac{\gamma_1}{h} & & & &\\
    1 & -\frac{\gamma_2}{h} & & & \\
    & & \ddots & &\\
    1 & \hdots & 1 & -\frac{\gamma_k}{h}
    \end{bmatrix}.
\end{equation}
Using convex duality, and the substitution $h \tilde v = \vl$ we can rewrite \eqref{app_eq:le_dataterm} as
\begin{equation} 
  \begin{aligned}
    \min_{x} ~ \max_{\tilde v, q, z} ~ &\iprod{\tilde v}{h \cdot \widehat v} - q - \iprod{x}{z - (q \mathbf{1}_k - h I \tilde v)} - \\
    &\sum_{i=1}^k \delta \left \{ \left( \tilde v_i, z_i \right) \in \epi(\rho_i^*) \right \},
  \end{aligned}
  \label{app_eq:le_dataterm2}
\end{equation}
The convex conjugate of $F_i(z, v) = \delta \{ (v, -z) \in \epi(\rho_i^*) \}$ is the lower-semicontinuous envelope 
of the perspective \cite[Section~15]{Rockafellar:ConvexAnalysis}, and since $\rho_i : \Gamma_i \to \bbR$ has bounded domain, is given as
the following (cf. also \cite[Appendix~3]{Zach-Kohli-eccv12})
\begin{equation}
  F_i^*(x, y) = 
  \begin{cases}
    x \rho_i^{**} (y/x), &\text{ if } x > 0,\\
    0, &\text{ if } x = 0 \wedge y = 0, \\
    \infty, &\text{ if } x < 0 \vee (x = 0 \wedge y \neq 0).
  \end{cases}
\end{equation}
Thus with the convention that $0 / 0 = 0$ equation \eqref{app_eq:le_dataterm2} can be rewritten as convex conjugates:
\begin{equation} 
  \begin{aligned}
    &\min_{x} ~ \left( \max_q q (\mathbf{1}_k^\top x) - q \right) + \\
    & \left( \max_{\tilde v, z} ~ \iprod{\tilde v}{h \cdot (\widehat v - I^\top x)} + \iprod{-z}{x} - \sum_{i=1}^k F_i(-z_i, \tilde v_i) \right) = \\
    &\min_{x} ~ \delta \left \{ \sum_i x_i = 1 \right \} + \sum_i F_i^* \left( x_i, \left[h (\widehat v - I^\top x) \right]_i \right).
  \end{aligned}
  \label{app_eq:le_dataterm3}
\end{equation}
Hence we have that
\begin{equation}
    \mathcal{D}(\widehat v) = \min_{\substack{x,y\\y = h (\widehat v - I^\top x) \\ \sum_i x_i=1 , x_i \geq 0\\ y_i/x_i \in \dom(\rho_i^{**})}} ~ \sum_i x_i \rho_i^{**} \left(\frac{y_i}{x_i} \right),
\end{equation}
which can be rewritten in the form \eqref{app_eq:le_dataterm}.
\end{proof}

\begin{prop}
  Let $\gamma  = \kappa(\gamma_2 - \gamma_1)$ and $\ell = 2$. The approximation with piecewise linear $\varphi_t$ and piecewise constant $\varphi_x$ of the continuous optimization problem 
\begin{equation}
  \inf_{v \in \mathcal{C}} ~ \sup_{\varphi \in \mathcal{K}} ~ \int_{\Omega \times \bbR} \iprod{\varphi}{Dv}.
  \label{app_eq:lifted_relaxed_cont_mshah}
\end{equation}
is equivalent to
  \begin{equation}
    \inf_{u : \Omega \to \Gamma} \int_{\Omega} \rho^{**}(x, u(x)) + (\eta^{**}
    ~ \square ~ \gamma \norm{\cdot})
  (\nabla u(x)) ~\mathrm{d}x,
  \label{app_eq:unlifted_prob}
  \end{equation}
  where $(\eta ~ \square ~ \gamma \norm{\cdot})(x) = \inf_{y} ~ \eta(x - y) + \gamma \norm{y}$ denotes the infimal convolution (cf. \cite[Section~5]{Rockafellar:ConvexAnalysis}).
  \label{prop:infconv}
\end{prop}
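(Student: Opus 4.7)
The plan is to specialize the discretization of Section~\ref{sec:sublabel_disc} to $\ell=2$ (hence $k=1$, a single interval $\Gamma_1 = \Gamma$), and then eliminate all dual variables by pointwise maximization. Because the piecewise linear $\varphi_t$ is just an affine function on $\Gamma$, the analysis reduces to a very concrete finite-dimensional computation whose Fenchel dual is exactly the stated unlifted problem.

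First, I would substitute the ansatz and simplify. Parameterize the affine dual on $\Gamma$ as $\varphi_t(t) = c - r t$ with $c := (\hat\varphi_t(1)\gamma_2 - \hat\varphi_t(2)\gamma_1)/h$ and $r := (\hat\varphi_t(1) - \hat\varphi_t(2))/h$. Introduce $u(x) := \gamma_1 + h\hat v(x,1) \in \Gamma$, giving a bijection between admissible $\hat v(\cdot,1) \in [0,1]$ and $u$ with $\nabla u = h\nabla\hat v$. Plugging the ansatz into \eqref{eq:div_phit} and \eqref{eq:div_phix} and integrating by parts in $x$, the discretized saddle integrand collapses to
\begin{equation*}
  -c + r\, u(x) + \hat\varphi_x(1)\cdot \nabla u(x).
\end{equation*}

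Next, I would collect the discretized dual constraints in these coordinates. The reformulation of \eqref{eq:constraints_sublabel} underlying \eqref{eq:piecw_lin_constraints} becomes the single inequality $c \geq \rho_1^*(r) + \eta^*(\hat\varphi_x(1))$ with $\rho_1 = \rho + \delta\{\cdot \in \Gamma\}$, so that $\rho_1^{**}|_{\Gamma} = \rho^{**}|_\Gamma$ (the inf over $u$ is already restricted to $\Gamma$). Proposition~\ref{prop:kappa_constraints} with $k=1$ reduces to the single ball constraint $\|\hat\varphi_x(1)\| \leq \kappa(h)/h = \gamma/h$.

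Then I would eliminate the dual variables pointwise. Since $c$ enters only through $-c$ and is bounded below, $\sup_c$ yields $-\rho_1^*(r) - \eta^*(\hat\varphi_x(1))$. The supremum over $r \in \bbR$ of $r\,u - \rho_1^*(r)$ equals $\rho^{**}(x,u)$ by Fenchel--Moreau. The supremum over $\hat\varphi_x(1)$ in the ball of radius $\gamma/h$ of $\hat\varphi_x(1)\cdot\nabla u - \eta^*(\hat\varphi_x(1))$ is, by the standard conjugate-of-a-sum identity \cite[Thm.~16.4]{Rockafellar:ConvexAnalysis} combined with $(\delta_{\{\|\cdot\|\leq c\}})^* = c\|\cdot\|$, precisely the infimal convolution $(\eta^{**}\square \gamma\|\cdot\|)(\nabla u)$ after the natural rescaling of the flux variable that absorbs the factor $1/h$ into the spatial pairing. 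Integrating over $\Omega$ yields the primal form \eqref{eq:unlifted_prob}.

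The main obstacle is the careful bookkeeping of the factors of $h$ across the three coupled changes of variables (primal $\hat v \leftrightarrow u$, dual $(\hat\varphi_t(1),\hat\varphi_t(2)) \leftrightarrow (c,r)$, and flux rescaling), so that the ball radius and the infimal-convolution weight align exactly with $\gamma = \kappa(\gamma_2-\gamma_1)$. Once these scalings are tracked, the data-term reduction via biconjugation and the regularizer reduction via the Fenchel inf-convolution identity are both routine, and the equivalence of the saddle point to the minimization in \eqref{eq:unlifted_prob} follows.
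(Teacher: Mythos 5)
Your proposal follows essentially the same route as the paper's proof: specialize to $\ell=2$, reduce the discretized saddle point to a pointwise finite-dimensional one, eliminate the vertical dual by saturating the constraint, recover $\rho^{**}$ by biconjugation, and recover the regularizer from $(\eta^*+\delta\{\norm{\cdot}\leq\gamma\})^*=\eta^{**}~\square~\gamma\norm{\cdot}$. Your reparameterization $\varphi_t(t)=c-rt$ is only cosmetically different from the paper's substitution $\tilde\varphi=(\hat\varphi_t(1)-\hat\varphi_t(2))/h$ followed by saturating the constraint in $\hat\varphi_t(1)$; since $(\hat\varphi_t(1),\hat\varphi_t(2))\mapsto(c,r)$ has determinant $-1/h\neq 0$, treating $c$ and $r$ as independent is legitimate, and your identification of the cost as $-c+ru+\hat\varphi_x(1)\cdot\nabla u$ checks out.

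One step does not close as written: the factor of $h$ you yourself flag as the main obstacle. A change of variables in a supremum cannot change its value, so starting from the ball constraint $\norm{\hat\varphi_x(1)}\leq\gamma/h$ of Proposition~\ref{prop:kappa_constraints} and the pairing $\langle\hat\varphi_x(1),\nabla u\rangle$ you necessarily land on $(\eta^{**}~\square~(\gamma/h)\norm{\cdot})(\nabla u)$, not $(\eta^{**}~\square~\gamma\norm{\cdot})(\nabla u)$; rescaling the flux to $p=h\hat\varphi_x(1)$ moves the $1/h$ into the argument of $\eta^*$ rather than eliminating it. The paper's proof obtains the stated weight $\gamma$ because it writes the jump constraint as $\norm{\hat\varphi_x(x)}\leq\kappa(\gamma_2-\gamma_1)=\gamma$ in \eqref{app_eq:opti_prob}, i.e.\ without the $1/h$ appearing in Proposition~\ref{prop:kappa_constraints} — so you must either adopt that normalization explicitly at the outset or accept the weight $\gamma/h$ in the conclusion; the vague appeal to a "natural rescaling" conceals rather than resolves the discrepancy. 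A second, minor point: you eliminate the duals "pointwise" although they are functions of $x$, so the interchange of $\sup$ and $\int_\Omega$ needs the justification the paper supplies (Rockafellar--Wets, Thm.~14.60). Apart from these two items the argument is the paper's.
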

\begin{proof}
  Plugging 
  in the representations for piecewise linear $\varphi_t$ and piecewise constant $\varphi_x$ we have the coefficient functions
  $\hat v : \Omega \to [0,1]$, $\hat \varphi_t : \Omega \times \{1, 2\} \to \bbR$, $\hat \varphi_x : \Omega \to \bbR^n$ and the following optimization problem:
  \begin{equation}
    \begin{aligned}
    \inf_{\hat v} \sup_{\hat \varphi_x, \hat \varphi_t} ~ \int_{\Omega} & -\hat\varphi_t(x,1) - \hat v(x) \left[ \hat \varphi_t(x,2) - \hat \varphi_t(x,1) \right] \\
    &- h \cdot \hat v(x) \cdot \Div_x \hat \varphi_x(x) \, \mathrm{d}x \\
    &\text{subject to}\\
    &\hspace{-2cm}\inf_{t \in \Gamma} \hat \varphi_t(x,1) \frac{\gamma_2 - t}{h} + \hat
    \varphi_t(x,2) \frac{t - \gamma_1}{h} + \rho(x,t) \geq \eta^*(x, \hat \varphi_x(x))\\
    &\hspace{-2cm}\norm{\hat \varphi_x(x)} \leq \kappa(\gamma_2 - \gamma_1) =: \gamma.
    \end{aligned}   
    \label{app_eq:opti_prob}
  \end{equation}
  Using the convex conjugate of $\rho : \Omega \times \Gamma \to \bbR$ (in its second argument), we rewrite the first constraint as
    \begin{equation}
      \begin{aligned}
      &\frac{\hat \varphi_t(x,1) \gamma_2 - \hat \varphi_t(x, 2) \gamma_1}{h} \geq \\
      &\qquad \rho^* \left( x, \frac{ \hat \varphi_t(x,1) -  \hat \varphi_t(x,2)}{h} \right) + \eta^*(x, \hat \varphi_x(x)).
      \end{aligned}
    \end{equation}
    Using the substitution $\tilde \varphi(x) = \frac{\hat \varphi_t(x,1) - \hat \varphi_t(x,2)}{h}$ we can reformulate the constraints as
    \begin{equation}
      \hat \varphi_t(x,1) \geq \rho^*(x, \tilde \varphi(x)) + \eta^*(x, \hat \varphi_x(x)) - \gamma_1 \tilde \varphi(x),
      \label{app_eq:const}
    \end{equation}
    and the cost function as
    \begin{equation}
      \sup_{\tilde \varphi, \hat \varphi_t, \hat \varphi_x}\int_{\Omega} -\hat \varphi_t(x,1) + h \hat v(x) \tilde \varphi(x) - h \hat v (x) \Div_x \hat \varphi_x(x) \mathrm{d}x.
    \end{equation}
    The pointwise supremum over $-\hat \varphi_t(x,1)$ is attained where the constraint \eqref{app_eq:const} is sharp, which means we can pull it into the cost function to arrive at
    \begin{equation}
      \begin{aligned}
        &\sup_{\tilde \varphi, \hat \varphi_x}\int_{\Omega} -\rho^*(x, \tilde \varphi(x)) - \eta^*(x, \hat \varphi_x(x)) - \delta\{ \norm{\hat \varphi_x(x) \leq \gamma } \}+ \\
        &\qquad \gamma_1 \tilde \varphi(x) + h \hat v(x) \tilde \varphi(x) - h \hat v(x) \Div_x \hat \varphi_x(x) \mathrm{d}x,
      \end{aligned}
    \end{equation}
    where we wrote the second constraint in \eqref{app_eq:opti_prob} as an indicator function.
    As the supremum decouples in $\tilde \varphi$ and $\hat
    \varphi_x$, we can rewrite it using convex (bi-)conjugates, by
    interchanging integration and supremum (cf. \cite[Theorem~14.60]{VariAna}):
    \begin{equation}
      \begin{aligned}
      \sup_{\tilde \varphi} \int_{\Omega} &\gamma_1 \tilde \varphi(x) + h \hat v(x) \tilde \varphi(x) - \rho^*(x, \tilde \varphi(x)) \mathrm{d}x = \\
      &\int_{\Omega} \sup_{\tilde \varphi} ~ \gamma_1 \tilde \varphi + h \hat v(x) \tilde \varphi - \rho^*(x, \tilde \varphi) \mathrm{d}x = \\
      &\int_{\Omega} \rho^{**}(x, \gamma_1 + h \hat v(x)) ~\mathrm{d}x.
      \end{aligned}
      \label{app_eq:cvx_dt}
    \end{equation}
    For the part in $\hat \varphi_x$ we assume that $\hat v$ is
    sufficiently smooth and apply partial integration ($\hat \varphi_x$
    vanishes on the boundary), and then perform a similar calculation to
    the previous one:
    \begin{equation}
      \begin{aligned}
      \sup_{\hat \varphi_x} &\int_{\Omega} -(\eta^* + \delta \{
      \norm{\cdot} \leq \gamma \})(x, \hat \varphi_x(x)) - \\
      &\qquad h \hat v(x) \Div_x \hat \varphi_x(x) \mathrm{d}x = \\
      \sup_{\hat \varphi_x} &\int_{\Omega} -(\eta^* + \delta \{
      \norm{\cdot} \leq \gamma \})(x, \hat \varphi_x(x)) + \\
      &\qquad h \iprod{\nabla_x \hat v(x)}{\hat \varphi_x(x)} \mathrm{d}x = \\
       &\int_{\Omega} \sup_{\hat \varphi_x} -(\eta^* + \delta \{
       \norm{\cdot} \leq \gamma \})(x, \hat \varphi_x) +\\
       &\qquad h \iprod{\nabla_x \hat v(x)}{\hat \varphi_x} \mathrm{d}x = \\
      &\int_{\Omega} (\eta^* + \delta\{ \norm{\cdot} \leq \gamma \})^*(x, h \nabla_x \hat v(x)) \mathrm{d}x =\\
      &\int_{\Omega} (\eta^{**} ~\square~ \gamma \norm{\cdot})(x, h \nabla_x \hat v(x)) \mathrm{d}x = \\
      &\int_{\Omega} (\eta ~\square~ \gamma \norm{\cdot})(x, h \nabla_x \hat v(x)) \mathrm{d}x .
      \end{aligned}
      \label{app_eq:cvx_reg}
    \end{equation}
    Here we used also the result that $(f^* + g)^* = f^{**} ~\square~ g^{*}$
    \cite[Theorem~11.23]{VariAna}.
    Combining \eqref{app_eq:cvx_dt} and \eqref{app_eq:cvx_reg} and using the substitution $u = \gamma_1 + h \hat v$, we finally arrive at:
    \begin{equation}
      \int_{\Omega} \rho^{**}(x, u(x)) + (\eta^{**}~\square~\gamma \norm{\cdot})(x, \nabla u(x)) \, \mathrm{d}x,
    \end{equation}
    which is the same as \eqref{app_eq:unlifted_prob}.
\end{proof}

{\small
\bibliographystyle{ieee}
\bibliography{references}
}

\end{document}